\documentclass[lettersize,journal]{IEEEtran}
\usepackage{amsmath,amsfonts,amssymb,amsthm}
\usepackage{algorithmic}
\usepackage{algorithm}
\usepackage{booktabs}
\usepackage{multicol}
\usepackage{multirow}
\usepackage{booktabs}
\usepackage{enumerate}
\usepackage{xcolor}
\usepackage{colortbl}
\usepackage{tabularray}
\usepackage{microtype}
\usepackage[utf8]{inputenc}
\usepackage[T1]{fontenc}
\UseTblrLibrary{booktabs}
\usepackage{tabularx}
\usepackage{hhline}
\usepackage{paralist}
\usepackage{marvosym}
\usepackage{array}
\usepackage[caption=false,font=scriptsize,labelfont=sf,textfont=sf]{subfig}
\usepackage{textcomp}
\usepackage{stfloats}
\usepackage{soul}
\usepackage{url}
\usepackage{verbatim}
\usepackage{graphicx}
\usepackage{hhline}
\usepackage{pifont}
\usepackage{cite}
\usepackage[colorlinks,
linkcolor=black,
anchorcolor=black,
citecolor=black]{hyperref}
\usepackage{orcidlink}
\def\ie{\emph{i.e.}}
\def\eg{\emph{e.g.}}

\def\etal{{\em et al.~}}

\newtheoremstyle{nonindentdef}
{5pt}   % 上方间距
{5pt}   % 下方间距
{\normalfont}  % 正文字体
{}      % 不缩进
{\bfseries} % 定理标题字体
{.}     % 后缀
{ }     % 标题和正文之间的空格
{\thmname{#1}\thmnumber{ #2}\thmnote{ (#3)}} % 定义标题格式

\theoremstyle{nonindentdef}
\newtheorem{theorem}{\bfseries Theorem}
\newtheorem{definition}{\bfseries Definition} 
\newtheorem{assumption}{\bfseries Assumption}
\newtheorem{prop}{\bfseries Proposition} % 
\newtheorem{property}{\bfseries Property}
\newtheorem{corollary}{\bfseries Corollary}

\newcommand{\first}[1]{\textcolor{red}{\underline{\textbf{#1}}}}
\newcommand{\second}[1]{\textcolor{blue}{\textbf{#1}}}

\begin{document}

\title{Effective Receptive Field Ordering Matters for Infrared Small Target Detection}

\author{Guoyi~Zhang,~Yanjin~Du,~Zhengyao~Zhao,~Tongsu~Zhang,~Guangsheng~Xu,~Siyang~Chen,\\~Xiangpeng~Xu,~Han~Wang~and~Xiaohu~Zhang
	% <-this % stops a space
	\thanks{Corresponding authors: \emph{Xiangpeng Xu, Han Wang and Xiaohu Zhang}}
	\thanks{All authors are with the Aircraft Visual Perception (AVP) Lab, School of Aeronautics and Astronautics, Sun Yat-sen University, Shenzhen 518107, Guangdong, China. (email:\href{mailto:zhanggy57@mail2.sysu.edu.cn}{zhanggy57@mail2.sysu.edu.cn})}
}

% The paper headers
\markboth{Journal of \LaTeX\ Class Files,~Vol.~14, No.~8, August~2021}%
{Zhang \MakeLowercase{\textit{et al.}}: Effective Receptive Field Ordering Matters for Infrared Small Target Detection}

%\IEEEpubid{0000--0000/00\$00.00~\copyright~2021 IEEE}
% Remember, if you use this you must call \IEEEpubidadjcol in the second
% column for its text to clear the IEEEpubid mark.

\maketitle

\begin{abstract}
In this work, we investigate a previously unexplored architectural dimension for infrared small target detection: the organization of effective receptive fields (ERFs) during feature refinement. Unlike existing approaches that primarily improve individual feature operators, we argue that ERF organization constitutes an architectural dimension independent of receptive field design itself, and formulate deep feature transformation as a progressive residual correction process, from which a theoretical framework for ERF scheduling is established. Specifically, we reveal that ERF refinement is governed by two fundamental properties: scale-frequency correspondence, which aligns different ERF scales with distinct residual frequency characteristics, and nonlinear non-commutativity, which makes different ERF orderings produce fundamentally different refinement trajectories. Together, these properties show that ERF organization, rather than ERF scale alone, governs refinement dynamics. Guided by these principles, we propose Receptive Field Ordering Network (RFONet), which realizes hierarchical ERF scheduling through a multigrid-inspired V-cycle strategy using only standard $3\times3$ convolutions. RFONet achieves state-of-the-art performance on multiple benchmarks with only 1.16M parameters and over 157 FPS inference speed. Beyond empirical performance, our theoretical analysis provides theoretical guarantees for stable residual refinement under perturbations, frequency shifts, and partial occlusions, which are consistently reflected in superior noise robustness and cross-dataset generalization. Finally, our framework reformulates ERF organization as a task-dependent optimization objective, providing a principled foundation for future adaptive receptive field scheduling.
\end{abstract}

\begin{IEEEkeywords}
Infrared small target, image segmentation, effective receptive field ordering, multi-grid method.
\end{IEEEkeywords}

\section{Introduction}
\IEEEPARstart{I}{nfrared} small target detection (IRSTD) aims to identify targets with weak responses under complex background interference, which is critical for applications such as remote sensing, surveillance, and early warning. Early methods relied on handcrafted priors, including local contrast \cite{10747828}, sparsity \cite{10091756}, and low-rank properties \cite{9217948}. Benefiting from deep learning, recent approaches have substantially advanced IRSTD by developing increasingly powerful feature transformation operators, leading to remarkable improvements in feature representation and detection performance.

Despite substantial progress, existing methods primarily improve individual feature operators to enhance feature representation, while their effective receptive fields (ERFs) \cite{luo2016understanding} are implicitly determined by network depth, resolution changes, or module design. However, feature refinement is ultimately realized through the evolution of ERFs across successive nonlinear transformations, suggesting that refinement quality depends not only on individual operators but also on how their receptive fields are organized. Therefore, existing studies implicitly optimize receptive fields through operator design, whereas whether ERF organization itself should be explicitly optimized as an independent architectural dimension remains largely unexplored.

In this work, we investigate ERF organization as a previously overlooked architectural dimension for infrared small target detection. Rather than treating ERF scales as passive consequences of network depth, resolution variation, or module configuration, we formulate deep feature transformation as a progressive residual refinement process and establish a theoretical framework for ERF scheduling. Our analysis reveals two fundamental properties. First, ERF refinement exhibits scale-frequency correspondence, indicating that different ERF scales preferentially refine residual components with different frequency characteristics. Second, nonlinear ERF transformations are generally non-commutative, implying that different ERF orderings produce fundamentally different refinement trajectories. Together, these properties indicate that refinement performance is governed not only by ERF scale but also by how ERFs are organized throughout the refinement process.

Guided by these theoretical principles, we propose Receptive Field Ordering Network (RFONet), a minimalist framework that explicitly organizes effective receptive fields during progressive feature refinement. Using only standard $3\times3$ convolutions, RFONet realizes hierarchical ERF scheduling through a V-cycle strategy inspired by multigrid optimization \cite{11483348}, allowing the effect of ERF organization to be isolated from architectural complexity. This design translates the proposed principles into stable scale-consistent refinement with improved robustness to perturbations, frequency shifts, and partial occlusions. Extensive experiments consistently support these theoretical predictions, demonstrating state-of-the-art accuracy together with superior robustness, cross-dataset generalization, and high computational efficiency. More importantly, our analysis suggests that the optimal ERF organization is determined by task-specific residual characteristics, motivating receptive field scheduling as a principled optimization objective rather than a heuristic architectural design choice.

The contributions of this work are summarized as follows:
\begin{itemize}
	\item We identify ERF organization as a previously unexplored architectural principle for infrared small target detection, shifting the focus from individual operator design to receptive field scheduling during progressive refinement.
	\item We establish a theoretical framework revealing the scale-frequency correspondence and nonlinear non-commutativity of ERF refinement, and derive its stability and robustness properties.
	\item We propose RFONet, a minimalist realization of the proposed principles through hierarchical V-cycle ERF scheduling, achieving accurate, robust, and highly efficient infrared small target detection.
\end{itemize}
The rest of this paper is organized as follows. Section~\ref{Section:Related_Work} reviews related works; Section~\ref{Section:Problem_Formulation} presents the problem formulation and theoretical analysis; Section~\ref{Section:ProposedMethod} introduces RFONet; Section~\ref{Section:Experiment} reports experimental results; and Section~\ref{Section:Conclusion} concludes the paper.
\section{Related Work}\label{Section:Related_Work}
\subsection{Infrared Small Target Detection}
With the rapid development of deep learning, data-driven methods have become the dominant paradigm for infrared small target detection \cite{11479915}. Most existing studies improve detection performance by enhancing the capability of individual feature transformation operators. Representative efforts include multi-scale feature interaction (DNANet \cite{DNANet}, UIUNet \cite{UIUNet}), shape-biased representation learning (ISNet \cite{ISNet}, CSRNet \cite{CSRNet}), global dependency modeling (SRSNet \cite{11174084}, ISTR \cite{liu2023infrared}, MCFNet \cite{11568953}), and prior-guided enhancement strategies based on frequency modeling, multi-task learning, foundation models, and unfolding mechanisms (WMRNet \cite{11278553}, MTMLNet \cite{11080263}, IRSAM \cite{IRSAM}, RPCANet$^{++}$ \cite{11537388}). Despite these advances, existing methods focus on improving individual feature operators but neglect the sequential organization of receptive fields during refinement, leaving the role of ERF scheduling in detection performance largely unclear.
\subsection{Effective Receptive Field Modeling}
Effective receptive field (ERF) has been widely recognized as a key factor in contextual information aggregation \cite{11440142,10962317,9186840}. Increasing ERF enables models to capture broader spatial dependencies, motivating extensive efforts to enlarge ERFs through multi-scale architectures \cite{10872821}, non-local operations \cite{wu2025lrformer}, self-attention mechanisms \cite{liu2023infrared}, large-kernel convolutions \cite{11130640}, and recent state-space models such as Mamba \cite{mamba2}. Beyond spatial coverage, existing studies have also investigated the semantic roles of different ERF scales \cite{11304568}, revealing that smaller ERFs are more suitable for preserving local details, whereas larger ERFs facilitate global structural modeling \cite{hou2024conv2former}.
Deformable convolutions \cite{xiong2024efficient} provide adaptive receptive fields through spatial sampling adjustment (where), yet leave the scale-wise organization of ERFs (when/how) unexplored. However, these studies mainly investigate individual ERF scales or their magnitude \cite{11498646}, while how multiple ERFs should be organized and scheduled during progressive feature refinement remains largely unexplored.
\section{Problem Formulation and Motivation}\label{Section:Problem_Formulation}
\subsection{Problem Formulation}
Given an infrared image, infrared small target detection aims to estimate the target map. The observed image is formulated as \cite{gao2013infrared}
\begin{equation}
	f_D(x,y) = f_T(x,y) + f_B(x,y) + f_N(x,y),
\end{equation}
where $f_D$, $f_T$, $f_B$, $f_N$ and $(x,y)$ are the original infrared image, the target image, the background image, the random noise image and the pixel location, respectively.  A deep network can be viewed as a finite-step refinement process \cite{8727950}:
\begin{equation}
	\mathbf{F}^{(k+1)}=\Phi_k(\mathbf{F}^{(k)}),\quad k=0,\dots,K-1,
\end{equation}
where $\mathbf{F}^{(k)}$ denotes the intermediate representation. We formulate it as $\mathbf{F}^{(k)}=\mathbf{T}^{(k)}+\mathbf{E}^{(k)}$, where $\mathbf{T}^{(k)}$ and $\mathbf{E}^{(k)}$ represent the target-related component and residual ambiguity, respectively. Thus, feature refinement aims to reduce $\mathbf{E}^{(k)}$ while preserving discriminative target information.

\subsection{Problem Analysis}
Based on the above formulation, the residuals involved in feature refinement are further analyzed from the perspective of their scale-frequency characteristics.
\begin{definition}[Scale-frequency Error Decomposition]
	Let an error signal $e\in\mathcal{E}$ be decomposed into multiple scale-frequency components $e=\sum_{s=1}^{S}e_s$,	where $e_s$ denotes the error component dominated by the frequency band
	$\Omega_s$. We assume a scale-frequency correspondence, where finer spatial scales are associated with higher-frequency components and coarser spatial scales correspond to lower-frequency components, \ie, $s_i<s_j \Rightarrow \Omega_{s_i}\succ\Omega_{s_j}$.
\end{definition}

\begin{assumption}[Scale-selective Stable Refinement]
	\label{Ass:SSR}
	Let $\mathcal T_s$ denote the refinement operator associated with scale $s$. For the residual component $e_s$ dominated by frequency band $\Omega_s$,
	$\|\mathcal T_s(e_s)\|
	\le
	\alpha_s\|e_s\|,
	\;
	0\le\alpha_s<1$,
	while for any mismatched residual component $e_j$ $(j\neq s)$,
	$\|\mathcal T_s(e_j)\|
	\le
	\beta_{s,j}\|e_j\|,
	\;
	\alpha_s<\beta_{s,j}<1$.
	Furthermore, for any two refinement operators $\Phi_k$ and $\Phi_s$, the correction discrepancy is bounded by
	$\|
	\Phi_k(e_s)-\Phi_s(e_s)
	\|
	\le
	\zeta_{k,s}\|e_s\|,
	\;
	\zeta_{k,s}<\infty$.
\end{assumption}
\noindent\textbf{Remark.}
This is not a restrictive condition but a formal abstraction of two universal facts: spatial scale inherently determines frequency preference, and any two bounded neural operators have finite Lipschitz discrepancy on a compact input domain.
\begin{theorem}[Scale-matched Residual Refinement]
	\label{Thm:SMS}
	Let a residual component $e_s$ correspond to frequency scale $s$. Given a refinement operator $\Phi_s$ specialized for correcting the residual component at scale $s$ and an arbitrary mismatched operator	$\Phi_k$ ($k\neq s$) applied to the same residual component, if the	cross-scale interference satisfies Assumption~\ref{Ass:SSR}, then
	$\left\|
	e_s-\Phi_s(e_s)
	\right\|
	\leq
	\left\|
	e_s-\Phi_k(e_s)
	\right\|
	+
	\zeta_{k,s}\|e_s\|$.
	Therefore, scale-matched refinement reduces the uncertainty introduced by applying an inappropriate correction operator to a residual component.
\end{theorem}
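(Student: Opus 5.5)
The inequality follows from the triangle inequality together with the discrepancy bound in Assumption~\ref{Ass:SSR}; the frequency-selectivity constants $\alpha_s,\beta_{s,j}$ play no role in the displayed bound and only inform its interpretation. The idea is to write the scale-matched residual $e_s-\Phi_s(e_s)$ as the mismatched residual $e_s-\Phi_k(e_s)$ plus the correction discrepancy between the two operators:
\begin{equation*}
e_s-\Phi_s(e_s)=\bigl(e_s-\Phi_k(e_s)\bigr)+\bigl(\Phi_k(e_s)-\Phi_s(e_s)\bigr).
\end{equation*}
This is an exact algebraic identity, obtained by adding and subtracting $\Phi_k(e_s)$. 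It requires no linearity of $\Phi_k$ or $\Phi_s$, so it is valid for the nonlinear refinement operators considered here.

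The steps are as follows. First, take norms of both sides of the identity. Second, apply the triangle inequality in the normed space $\mathcal{E}$ to get
\begin{equation*}
\|e_s-\Phi_s(e_s)\|\le\|e_s-\Phi_k(e_s)\|+\|\Phi_k(e_s)-\Phi_s(e_s)\|.
\end{equation*}
Third, bound the last term by $\zeta_{k,s}\|e_s\|$ using the final clause of Assumption~\ref{Ass:SSR}. Since $\zeta_{k,s}<\infty$, the right-hand side is finite, and the claimed inequality follows.

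There is no real technical obstacle; the work is in stating the scope of the result correctly. The bound says only that the matched operator's residual exceeds any mismatched operator's residual by at most a controlled amount proportional to $\|e_s\|$. Read this way, it formalizes the closing sentence of the theorem: the uncertainty of choosing the wrong operator is confined to a band of width $\zeta_{k,s}\|e_s\|$.

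The bound does not show that $\Phi_s$ is strictly better than $\Phi_k$. That stronger claim would need the contraction rates $\alpha_s<\beta_{s,k}$ from Assumption~\ref{Ass:SSR}, interpreted through $\mathcal{T}_s$ as the residual map $e\mapsto e-\Phi_s(e)$. Under that interpretation, $\|e_s-\Phi_s(e_s)\|\le\alpha_s\|e_s\|$, which is the sharper matched-scale contraction. A remark to this effect can follow the proof, but it is not needed for the inequality as stated.
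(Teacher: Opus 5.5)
Your proposal is correct and follows the paper's proof step for step. You add and subtract $\Phi_k(e_s)$, apply the triangle inequality, and bound the discrepancy term by $\zeta_{k,s}\|e_s\|$ using the last clause of Assumption~\ref{Ass:SSR}; like you, the paper never uses the constants $\alpha_s$ and $\beta_{s,j}$ in this argument.
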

\begin{proof}
	For the matched refinement operator, the corrected residual is defined as
	\begin{equation}
		r_s^m=e_s-\Phi_s(e_s).
	\end{equation}
	For a mismatched refinement operator $\Phi_k$, the corresponding residual is
	\begin{equation}
		r_s^u=e_s-\Phi_k(e_s).
	\end{equation}
	By adding and subtracting $\Phi_k(e_s)$, we have
	\begin{equation}
		\begin{aligned}
			r_s^m
			&=
			e_s-\Phi_k(e_s)
			+
			\Phi_k(e_s)-\Phi_s(e_s).
		\end{aligned}
	\end{equation}
	Applying the triangle inequality gives
	\begin{equation}
		\|r_s^m\|
		\leq
		\|r_s^u\|
		+
		\|\Phi_k(e_s)-\Phi_s(e_s)\|.
	\end{equation}
	According to Assumption~\ref{Ass:SSR}, the discrepancy between two refinement operators is bounded as
	\begin{equation}
		\|\Phi_k(e_s)-\Phi_s(e_s)\|
		\leq
		\zeta_{k,s}\|e_s\|.
	\end{equation}
	Combining the above inequalities yields
	\begin{equation}
		\|r_s^m\|
		\leq
		\|r_s^u\|
		+
		\zeta_{k,s}\|e_s\|,
	\end{equation}
	which completes the proof.
\end{proof}
\noindent\textbf{Remark.}
Theorem~\ref{Thm:SMS} does not state that mismatched operators are ineffective, but only that scale mismatch introduces an additional bounded interference term, motivating scale-aware organization of refinement operators.
\subsection{Motivation}
The above analysis suggests that residual correction is essentially a scale-matching process, where the spatial characteristics of correction operators determine their ability to handle different error components. In deep networks, such spatial characteristics are governed by the effective receptive field \cite{luo2016understanding} of feature transformations.
\begin{definition}[Effective Receptive Field (ERF \cite{luo2016understanding}, Luo \etal NeurIPS'16)]\label{def:erf}
	The effective receptive field (ERF) characterizes the actual contribution distribution of input regions to an output response, rather than the maximum coverage defined by the theoretical receptive field. For an $L$-layer network, the contribution of an input position $(i-p_1,j-p_2)$ to an output neuron at $(i,j)$ is measured by
	$g(p_1,p_2)
	=
	\left\|
	\frac{\partial \mathbf{Y}^{(L)}(i,j)}
	{\partial \mathbf{X}^{(0)}(i-p_1,j-p_2)}
	\right\|_1$.
	The ERF is defined as the normalized contribution distribution:
	$\mathrm{ERF}(p_1,p_2)
	=
	\frac{g(p_1,p_2)}
	{\sum_{q_1,q_2}g(q_1,q_2)}$.
	The spatial distribution of $\mathrm{ERF}(p_1,p_2)$ reflects the effective scale of information aggregation.
\end{definition}
\begin{property}[Frequency Preference Induced by ERF Scale]
	\label{prop:ERF_frequency}
	Let $\rho_{\sigma}(\mathbf{x})$ denote an ERF distribution with spatial	scale $\sigma$, where $\rho_{\sigma}(\mathbf{x})
	=
	\frac{1}{\sigma^d}
	\rho
	(
	\frac{\mathbf{x}}{\sigma}
	)$,
	where $d$ represents the spatial dimension.	According to the Fourier scaling property, the corresponding frequency response satisfies 
	$\hat{\rho}_{\sigma}(\boldsymbol{\omega})
	=
	\hat{\rho}
	(
	\sigma\boldsymbol{\omega}
	)$.
	Therefore, increasing ERF scale expands the spatial aggregation range and shifts the frequency response toward lower frequencies. Specifically, for	$\sigma_2>\sigma_1$, the response of a larger ERF is evaluated at a higher frequency coordinate $\sigma_2\|\boldsymbol{\omega}\|
	>
	\sigma_1\|\boldsymbol{\omega}\|$,
	which results in a relatively stronger attenuation tendency for rapidly	varying components. Hence, larger ERFs exhibit stronger low-frequency preference, whereas smaller ERFs preserve more localized high-frequency details.
\end{property}
\begin{proof}
	The spatial scaling relationship of ERF indicates that enlarging the receptive field is equivalent to stretching its contribution distribution.	According to the Fourier scaling theorem \cite{huang2023adaptive},
	\begin{equation}
		\mathcal{F}
		\{
		\rho_{\sigma}(\mathbf{x})
		\}
		=
		\hat{\rho}
		(
		\sigma\boldsymbol{\omega}
		).
	\end{equation}
	When $\sigma$ increases, the frequency response is compressed toward the low-frequency region.
\end{proof}
\begin{figure}[!t]
	\centering
	\includegraphics[width=1\linewidth]{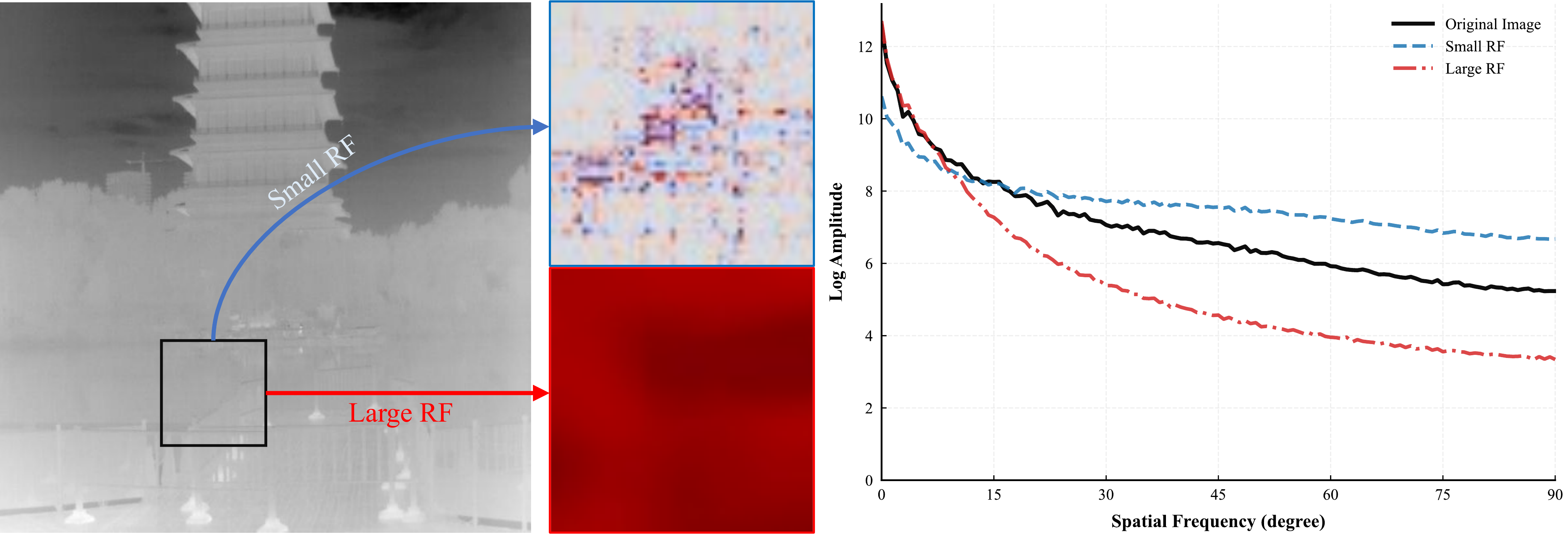}
	\caption{Verification of Property \ref{prop:ERF_frequency}. Left: original image with local pseudo-color overlays of small and large receptive field responses. Right: radial frequency spectra. Small RF preserves high frequencies; large RF attenuates them.}
	\label{fig:fre}
\end{figure}
\noindent\textbf{Remark.} Property~\ref{prop:ERF_frequency} indicates that ERF is not merely a measure of spatial coverage, but also characterizes the frequency preference of feature transformations. Specifically, as shown in Fig. \ref{fig:fre}, large ERFs facilitate broader contextual aggregation and are more suitable for correcting low-frequency residual components, whereas small ERFs preserve sensitivity to localized high-frequency variations. Therefore, different ERF scales can be interpreted as scale-specific refinement operators, providing a theoretical basis for ERF scheduling.
\begin{property}[Non-commutativity of ERF-guided Refinement]
	\label{prop:ERF_noncomm}
	Let $\Phi_{r_i}$ and $\Phi_{r_j}$ denote two feature transformation	operators associated with different ERF scales $r_i$ and $r_j$. For nonlinear deep networks, these ERF-guided transformations generally do not satisfy the commutative property $\Phi_{r_i}\circ\Phi_{r_j}
	\neq
	\Phi_{r_j}\circ\Phi_{r_i}$.
	Therefore, the ordering of ERF transformations affects the feature refinement trajectory and the final representation.
\end{property}
\begin{proof}
	An ERF-guided transformation with scale $r$ can be formulated as
	\begin{equation}
		\Phi_r(\mathbf{x})
		=
		\sigma
		(
		\mathcal K_r(\mathbf{x})
		),
	\end{equation}
	where $\mathcal K_r(\cdot)$ denotes the scale-dependent aggregation	operator induced by ERF $r$, and $\sigma(\cdot)$ represents nonlinear feature transformation. For two different ERF scales $r_i$ and $r_j$, the corresponding aggregation operators generally satisfy
	\begin{equation}
		\mathcal K_{r_i}
		\neq
		\mathcal K_{r_j},
	\end{equation}
	because they integrate information from different spatial neighborhoods. Considering two refinement orders, we obtain
	\begin{equation}
		\Phi_{r_i}
		(
		\Phi_{r_j}(\mathbf{x})
		)
		=
		\sigma
		(
		\mathcal K_{r_i}
		(
		\sigma
		(
		\mathcal K_{r_j}(\mathbf{x})
		)
		)
		),
	\end{equation}
	and
	\begin{equation}
		\Phi_{r_j}
		(
		\Phi_{r_i}(\mathbf{x})
		)
		=
		\sigma
		(
		\mathcal K_{r_j}
		(
		\sigma
		(
		\mathcal K_{r_i}(\mathbf{x})
		)
		)
		).
	\end{equation}
	Since each transformation modifies the feature distribution for subsequent operators and ERFs operate at different spatial scales, different ordering strategies generally yield distinct intermediate representations.
	Therefore,
	\begin{equation}
		\Phi_{r_i}\circ\Phi_{r_j}(\mathbf{x})
		\neq
		\Phi_{r_j}\circ\Phi_{r_i}(\mathbf{x}),
	\end{equation}
	in general.
\end{proof}
\begin{figure}[!t]
	\centering
	\includegraphics[width=1\linewidth]{Comm.pdf}
	\caption{Qualitative verification of the non-commutativity property. Five MSHNet \cite{MSHNet} variants are constructed by inserting HLKConv modules \cite{LCRNet} at different locations. They contain identical operators, and parameters, differing only in the ordering of receptive field expansion. Nevertheless, their input-to-output effective receptive fields are markedly different, demonstrating that ERF ordering alone changes the refinement trajectory, consistent with Property \ref{prop:ERF_noncomm}.}
	\label{fig:Comm}
\end{figure}
\noindent\textbf{Remark.}
Property~\ref{prop:ERF_noncomm} establishes the theoretical necessity of
ERF scheduling, while Fig.~\ref{fig:Comm} provides qualitative evidence that
different ERF orderings produce distinct effective receptive fields.

\noindent\textbf{Motivation.}
Combining Definition~\ref{def:erf}, Property~\ref{prop:ERF_frequency}, Theorem~\ref{Thm:SMS}, and Property~\ref{prop:ERF_noncomm}, we reveal that ERF scheduling is fundamentally a scale-frequency matching problem rather than a simple architectural design choice. Specifically, different ERF scales exhibit distinct frequency preferences, enabling scale-adaptive refinement of heterogeneous residual components. Furthermore, the non-commutative nature of nonlinear ERF-guided transformations indicates that different ERF orders produce distinct refinement trajectories and final representations. Therefore, effective ERF-based refinement depends not only on the available scales but also on their organization throughout the refinement process. This leads to a fundamental question: how should ERFs with different spatial scales be scheduled to achieve effective residual suppression?

\section{Methodology}\label{Section:ProposedMethod}
\begin{figure*}[!t]
	\centering
	\includegraphics[width=1\linewidth]{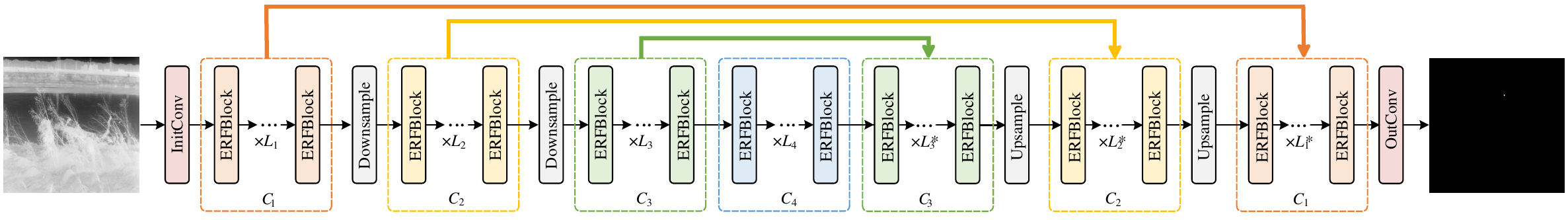}
	\caption{The overall architecture of RFONet follows the conventional U-Net framework with stacked ERFBlocks. InitConv and OutConv are implemented by two $3\times3$ convolutions, projecting the input from 1 to $C_1$ channels and the features back to a single-channel prediction, respectively. Here, $L_i^*=L_i+1$. To isolate the effect of ERF organization, all ERFBlocks use the same local operator, and different ERFs are generated solely through resolution transformations.}
	\label{fig:OverallofRFONet}
\end{figure*}
\subsection{Overview Architecture}
To isolate the effect of ERF organization from architectural complexity, we adopt a minimalist design principle. As shown in Fig.~\ref{fig:OverallofRFONet}, RFONet follows a standard U-Net architecture without introducing additional feature extraction branches or task-specific modules commonly used in infrared small target detection. Each stage is constructed with the same ERFBlock, which only contains standard $3\times3$ convolutions, pooling, nearest-neighbor interpolation, normalization, and nonlinear activation. Different ERF scales are naturally induced by varying spatial resolutions through hierarchical downsampling. Therefore, RFONet serves as a controlled framework to investigate how ERFs should be organized across refinement stages, rather than relying on architectural complexity for performance improvement.

\subsection{Simple Yet Powerful ERFBlock}
Infrared small target detection requires suppressing heterogeneous residuals, including high-frequency disturbances from noise and edges and low-frequency variations from large-scale background structures \cite{gao2013infrared}. According to Theorem \ref{Thm:SMS} and Property \ref{prop:ERF_frequency}, different ERFs exhibit distinct frequency preferences and therefore specialize in refining different residual components. Consequently, effective feature refinement depends not only on the available ERF scales but also on how they are organized throughout the refinement process. This naturally motivates a multiscale scheduling framework in which different ERF scales cooperate to progressively suppress heterogeneous residuals. Following this principle, we propose the Effective Receptive Field Block (ERFBlock), which explicitly organizes ERFs during progressive feature refinement.
\begin{definition}[Multigrid Principle]\label{Def:MG} 
	Let $\{\mathcal{C}_{s}\}_{s=1}^{S}$ denote a set of scale-specific correction operators, where $s$ indexes the resolution scale. Given an error representation $e^{(0)}$, the refinement process under a selected scale sequence $\{s_k\}_{k=1}^{K}$ is formulated as $e^{(k+1)} = (\mathcal{I}-\mathcal{C}_{s_k})e^{(k)}, \; k=0,\cdots,K-1 $, where $\mathcal{C}_{s_k}$ selectively attenuates the error components corresponding to scale $s_k$. The final residual after $K$ correction steps can be expressed as $e^{(K)} = \prod_{k=1}^{K} (\mathcal{I}-\mathcal{C}_{s_k})e^{(0)} $. Therefore, the multigrid principle aims to determine an effective scale scheduling strategy: $\{s_k^{*}\}_{k=1}^{K} = \arg\min_{\{s_k\}} \left\| \prod_{k=1}^{K} (\mathcal{I}-\mathcal{C}_{s_k})e^{(0)} \right\|$, where different scales cooperate to reduce heterogeneous error components. 
\end{definition}
Although the above formulation is derived in the residual space, it naturally extends to deep feature refinement, where ERFs act as scale-specific refinement operators. The multigrid principle admits a variety of scheduling strategies, ranging from the classical V-cycle and W-cycle to more sophisticated adaptive schedules. While these alternatives offer different levels of scheduling flexibility, they also introduce additional optimization factors that may obscure the contribution of ERF organization itself. \textbf{\textit{Since the objective of this work is to investigate the role of ERF ordering rather than the scheduling policy, we deliberately adopt the classical V-cycle as the simplest yet representative instantiation of the multigrid principle.}} Compared with adaptive schedules, this fixed design introduces no additional optimization variables \cite{9560049}, providing a controlled setting to isolate the contribution of ERF ordering.
\begin{definition}[V-cycle Refinement Principle]
	\label{Def:Vcycle}
	Let $\{\mathcal C_s\}_{s=1}^{S}$ denote scale-specific correction operators with ERF size $r_s$. A V-cycle schedule is defined as $\mathcal S_V=
	\{s_1,\cdots,s_m,\cdots,s_1\}$,
	where $r_{s_1}<r_{s_2}<\cdots<r_{s_m}$,	and the corresponding refinement process is $e^{(k+1)}
	=
	(\mathcal I-\mathcal C_{s_k})e^{(k)}$.
	The complete V-cycle operator is
	$\mathcal V
	=
	\prod_{k=1}^{K}
	(\mathcal I-\mathcal C_{s_k})$,
	yielding
	$e^{(K)}
	=
	\mathcal V e^{(0)}$.
\end{definition}

\noindent\textbf{Implementation Details.}
Following Definition~\ref{Def:Vcycle}, we instantiate the proposed ERF organization through a scale-scheduled residual refinement mechanism. Different from conventional multi-scale designs that focus on simultaneous feature aggregation, each ERFBlock explicitly organizes receptive field transitions into a sequential refinement process across different spatial
scales. For a maximum scale level \(s\), the refinement trajectory is defined as
\begin{equation}
	\mathcal{S}_{s}
	=
	\{1,2,\cdots,s-1,s,s-1,\cdots,2,1\},
\end{equation}
which progressively expands the contextual modeling range and subsequently returns to fine-scale refinement. Based on this trajectory, the overall ERF refinement operator is formulated as
\begin{equation}
	\mathcal{V}_{s}
	=
	\mathcal{R}_{s_K}
	\circ
	\cdots
	\circ
	\mathcal{R}_{s_1},
	\quad s_k\in\mathcal{S}_{s},
	\label{Eq:s}
\end{equation}
where $\mathcal{R}_{s_k}$ denotes the scale-conditioned ERF refinement operator. For instance,
$\mathcal{S}_{1}=\{1\}$,
$\mathcal{S}_{2}=\{1,2,1\}$,
and
$\mathcal{S}_{3}=\{1,2,3,2,1\}$.
Such hierarchical scheduling enables residual correction along a progressive multi-scale refinement trajectory. Specifically, each ERF refinement operator is formulated as a residual
transformation in a scale-dependent spatial domain:
\begin{equation}
	\mathcal{R}_{s}(\mathbf X)
	=
	\mathbf X
	+
	\mathcal{T}_{s}(\mathbf X),
\end{equation}
where $\mathcal{T}_{s}(\cdot)$ denotes the scale-conditioned residual correction operator. This formulation directly realizes the scale-matched correction principle in Theorem~\ref{Thm:SMS}: each $\mathcal{R}_s$ is specialized for residual components at scale $s$, with cross-scale interference bounded by domain projection. To explicitly control the ERF size, $\mathcal{T}_{s}$ is implemented as
\begin{equation}
	\mathcal{T}_{s}
	=
	\Pi^{-1}_{s}
	\circ
	\mathcal{F}
	\circ
	\Pi_{s},
\end{equation}
where $\Pi_s(\cdot)$ projects the feature representation into a scale-specific spatial domain, $\Pi_s^{-1}(\cdot)$ restores the original resolution, and $\mathcal{F}(\cdot)$ performs local feature refinement. Through different spatial projections, the same refinement function generates distinct ERF scales without introducing additional convolutional parameters, consistent with the multigrid principle of sharing refinement operators across scales \cite{10061442}.
In practice, the shared refinement function is implemented as
\begin{equation}
	\mathcal{F}(\mathbf X)
	=
	\lambda
	\mathrm{Conv}_{3\times3}
	(
	\delta(\mathrm{Norm}(\mathbf X))
	),
\end{equation}
where $\lambda$ is a learnable modulation coefficient. The scale projection $\Pi_s(\cdot)$ and inverse projection $\Pi_s^{-1}(\cdot)$ are realized by max pooling and nearest-neighbor
interpolation with a scale factor of $2^s$, respectively. For an input feature map $\mathbf X\in\mathbb{R}^{C\times H\times W}$, the complete ERFBlock is formulated as a hierarchical residual refinement:
\begin{equation}
	\hat{\mathbf X}
	=
	\mathcal{V}_{s}(\mathbf X)
	+
	\mathcal{F}(\mathbf X),
\end{equation}
\begin{equation}
	\mathbf Y
	=
	\mathbf X
	+
	\mathcal{F}(\hat{\mathbf X}).
\end{equation}
By decoupling ERF scheduling from feature transformation, the proposed ERFBlock provides a controlled framework for investigating how receptive field organization influences progressive feature refinement.

\noindent\textbf{Remark.} Let $s$ denote the effective stride between a feature map and the input image. Since each feature element corresponds to an $s\times s$ region in the input domain, the spatial scale of the ERF is proportionally enlarged by the stride factor \cite{luo2016understanding}:
\begin{equation}
	r_{\mathrm{ERF}}^{(s)}
	=
	s\cdot r_{\mathrm{ERF}}^{(1)} .
\end{equation}
Therefore,
\begin{equation}
	s_2>s_1
	\Rightarrow
	r_{\mathrm{ERF}}^{(s_2)}
	>
	r_{\mathrm{ERF}}^{(s_1)} ,
\end{equation}
indicating that down-sampling provides a simple mechanism to enlarge ERF without modifying the local operator.
\begin{prop}[ERFBlock as a Multigrid V-Cycle]
	Let $\{\mathcal{V}_\ell\}_{\ell=1}^{s}$ be a hierarchy of feature spaces.
	Define operators:
	$R_\ell:\mathcal{V}_\ell\to\mathcal{V}_{\ell+1},\quad
	P_\ell:\mathcal{V}_{\ell+1}\to\mathcal{V}_\ell,\quad
	S_\ell:\mathcal{V}_\ell\to\mathcal{V}_\ell,$ 
	with $R_s=P_s=I$. Define the single-level correction:
	$\mathcal{R}_\ell = I + P_\ell \circ S_\ell \circ R_\ell$.
	The ERFBlock V-cycle along $\mathcal{S}_s=\{1,2,\dots,s,\dots,1\}$ is:
	$\mathcal{V}_s =
	\left(\prod_{\ell=1}^{s} \mathcal{R}_\ell\right)
	\circ
	\left(\prod_{\ell=s-1}^{1} \mathcal{R}_\ell\right)$.
	Expanding:
	$\mathcal{V}_s =
	\mathcal{R}_1\mathcal{R}_2\cdots\mathcal{R}_{s-1}
	\;\mathcal{R}_s\;
	\mathcal{R}_{s-1}\cdots\mathcal{R}_2\mathcal{R}_1$.
	The classical multigrid V-cycle operator $V_\ell:\mathcal{V}_\ell\to\mathcal{V}_\ell$ is:
	$	V_\ell = S_\ell, \ell=s,
	S_\ell \circ (I + P_\ell \circ V_{\ell+1} \circ R_\ell) \circ S_\ell, \ell<s.$
	\textbf{Claim:} Both follow the identical recursion:
	$	\Phi_\ell = \mathcal{R}_\ell, \ell=s,
	\mathcal{R}_\ell \circ \Phi_{\ell+1} \circ \mathcal{R}_\ell, \ell<s$.
	Since $\mathcal{V}_s = \Phi_1$ by expansion, and $V_1 = \Phi_1$ by definition,
	we have $\mathcal{V}_s = V_1$.
\end{prop}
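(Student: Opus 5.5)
The plan is an induction on the level index, run from the coarsest level $\ell=s$ down to $\ell=1$. The inductive object is the family of nested operators
\[
\Phi_\ell=\mathcal{R}_\ell\mathcal{R}_{\ell+1}\cdots\mathcal{R}_{s-1}\,\mathcal{R}_s\,\mathcal{R}_{s-1}\cdots\mathcal{R}_\ell .
\]
I would show two things. First, this palindromic product satisfies $\Phi_s=\mathcal{R}_s$ and $\Phi_\ell=\mathcal{R}_\ell\circ\Phi_{\ell+1}\circ\mathcal{R}_\ell$. Second, the classical multigrid recursion, once rewritten in terms of the single-level corrections $\mathcal{R}_\ell$, satisfies the same two relations. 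Since both sequences share the base case and the recursion, they agree at every level, so $\mathcal{V}_s=\Phi_1=V_1$.

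\textbf{ERFBlock side.} By Eq.~\eqref{Eq:s}, $\mathcal{V}_s$ composes $\mathcal{R}_{s_K}\circ\cdots\circ\mathcal{R}_{s_1}$ along $\mathcal{S}_s=\{1,\dots,s,\dots,1\}$. Because this sequence is a palindrome, the order of composition does not matter. I would then peel off the outermost pair of factors: the first and last entries of $\mathcal{S}_s$ are both $1$, and the remaining middle block is exactly the trajectory $\{2,\dots,s,\dots,2\}$. Iterating this peeling gives $\Phi_\ell=\mathcal{R}_\ell\circ\Phi_{\ell+1}\circ\mathcal{R}_\ell$ with $\Phi_s=\mathcal{R}_s$, and hence $\mathcal{V}_s=\Phi_1$. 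This step is purely combinatorial.

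\textbf{Classical side (the main obstacle).} The recursion as literally written, $V_\ell=S_\ell\circ(I+P_\ell V_{\ell+1}R_\ell)\circ S_\ell$ with $V_s=S_s$, is not the same as $\mathcal{R}_\ell\circ V_{\ell+1}\circ\mathcal{R}_\ell$. Worse, its operators act on different spaces, since $V_{\ell+1}$ lives on $\mathcal{V}_{\ell+1}$. So the claim only holds under a reinterpretation, which I would make explicit.

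\begin{itemize}
\item Identify the smoothing step at level $\ell$ with the full single-level correction $\mathcal{R}_\ell=I+P_\ell S_\ell R_\ell$. This must be read as an operator lifted to the finest space $\mathcal{V}_1$ through the composite transfers $P_1\cdots P_{\ell-1}$ and $R_{\ell-1}\cdots R_1$, exactly as the ERFBlock realizes $\Pi_\ell^{-1}\mathcal{F}\Pi_\ell$ at full resolution.
\item Identify the coarse-grid correction $I+P_\ell V_{\ell+1}R_\ell$ with the lifted operator $V_{\ell+1}$ acting on $\mathcal{V}_1$. This uses the convention $R_s=P_s=I$ at the bottom.
\end{itemize}

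Under these identifications, $V_s=\mathcal{R}_s$ and $V_\ell=\mathcal{R}_\ell\circ V_{\ell+1}\circ\mathcal{R}_\ell$. This is the same recursion as $\Phi_\ell$, so induction from $\ell=s$ downward gives $V_\ell=\Phi_\ell$ for all $\ell$, and in particular $V_1=\Phi_1$.

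I would state these identifications explicitly as the defining convention of "classical V-cycle" in the feature-space setting. Without them the two operators are not literally equal: one has nested $I+P(\cdot)R$ terms and the other plain products. So the proof establishes structural equivalence of the recursions, which is what the Claim asserts, rather than operator identity under the literal textbook definitions.
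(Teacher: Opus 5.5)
Your ERFBlock half is correct and is the paper's ``by expansion'' step. One wording fix: the trajectory is a palindrome, meaning it reads the same in both directions. That does not make the order of composition irrelevant; the factors $\mathcal{R}_\ell$ do not commute.

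The gap is on the classical side. The paper handles that step only by asserting ``$V_1=\Phi_1$ by definition'', and your identifications do the same. With $V_\ell$ as defined in the statement, the claim is false, so nothing can be derived there. For $s=1$ you have $\mathcal{V}_1=\mathcal{R}_1=I+S_1$ (because $R_1=P_1=I$), while $V_1=S_1$, and these never coincide. For $s=2$ with scalars $P_1=R_1=1$, $S_1=a$, $S_2=b$, you get $\mathcal{V}_2=(1+a)^2(1+b)$ but $V_1=a^2(1+b)$, so the failure is not confined to the base case.

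Your two identifications do not bridge this; they overwrite $V_\ell$ with $\Phi_\ell$. The first, read as an equation, says $S_s=I+S_s$ at the coarsest level, since the same symbol sits inside $\mathcal{R}_s$. Read as a renaming, it changes the smoother and hence the operator $V_\ell$ the statement is about. The second, $I+P_\ell V_{\ell+1}R_\ell\equiv V_{\ell+1}$, is the whole content of the claim, and it is not what lifting produces. Lifting gives $P_\ell V_{\ell+1}R_\ell$, or its composite-transfer version on $\mathcal{V}_1$, which differs from $I+P_\ell V_{\ell+1}R_\ell$ by the identity. After both substitutions $V_1=\Phi_1$ is a tautology. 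Your argument therefore proves only that the ERFBlock schedule is a palindromic product, which is the easy half.

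To turn the statement into a theorem, you need a hypothesis under which the coarse-grid term genuinely collapses into a product of lifted single-level corrections. The standard setting is linear multigrid for $Au=f$ in error-propagation form, with invertible Galerkin coarse operators $A_{\ell+1}=R_\ell A_\ell P_\ell$:
$E_s=I-B_sA_s$ and $E_\ell=(I-B_\ell A_\ell)\bigl(I-P_\ell(I-E_{\ell+1})A_{\ell+1}^{-1}R_\ell A_\ell\bigr)(I-B_\ell A_\ell)$.
Write the level-$(\ell+1)$ corrections as $T_k=\tilde P_kB_k\tilde R_kA_{\ell+1}$ and set $\Lambda(X)=P_\ell XA_{\ell+1}^{-1}R_\ell A_\ell$. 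Then $\Lambda(T_{k_1}\cdots T_{k_n})=\hat T_{k_1}\cdots\hat T_{k_n}$ with $\hat T_k=(P_\ell\tilde P_k)B_k(\tilde R_kR_\ell)A_\ell$, because every interior factor $R_\ell A_\ell P_\ell$ equals $A_{\ell+1}$. Since $I-\prod_k(I-T_k)$ has no identity term, this gives $I-\Lambda(I-E_{\ell+1})=\prod_k(I-\hat T_k)$. Induction from $\ell=s$ then yields $E_1=(I-\hat T_1)\cdots(I-\hat T_s)\cdots(I-\hat T_1)$, with all corrections lifted to level $1$ by composite transfers. This is the subspace-correction form of the V-cycle: a palindromic product of single-level corrections of the type $\mathcal{R}_\ell$.

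That lemma is the missing idea that would justify your second identification. It uses linearity and the Galerkin relation, neither of which the proposition assumes, and the ERFBlock's $\mathcal{F}$ is nonlinear. Without such hypotheses, the correct conclusion is the structural analogy you end with. It should be stated as an analogy between two palindromic recursions, not as the operator identity $\mathcal{V}_s=V_1$.
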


\begin{theorem}[Scale-matched Residual Contraction]
	\label{Thm:ERFBlock_refinement}
	Consider the progressive residual refinement process of an ERF-guided network. The evolution of residual component $E_s$ at refinement stage $k$ is	formulated as
	$E_s^{(k+1)}
	=
	E_s^{(k)}
	-
	\eta_{s_i,s}\Phi_{s_i}(E_s^{(k)})
	+
	\xi_s(E^{(k)})$,
	where $\Phi_{s_i}$ denotes the ERF correction operator with scale $s_i$, $\eta_{s_i,s}$ represents the correction strength, and	$\xi_s(E^{(k)})$ denotes the aggregated nonlinear residual perturbation on	component $s$, including self-interaction and cross-scale effects. Under the bounded cross-scale interference condition established in Theorem~\ref{Thm:SMS}, there exists a local constant $L_s>0$ satisfying
	$\|\xi_s(E^{(k)})\|
	\leq
	L_s\|E_s^{(k)}\|$.
	Furthermore, assume that the scale-matched ERF correction step forms a local contraction, \ie, there exists a contraction coefficient
	$\rho_{s_i,s}\in[0,1)$ such that
	$\left\|
	E_s^{(k)}
	-
	\eta_{s_i,s}\Phi_{s_i}(E_s^{(k)})
	\right\|
	\leq
	\rho_{s_i,s}
	\|E_s^{(k)}\|$.
	If
	$\rho_{s_i,s}+L_s<1$,
	then the residual refinement process is contractive and satisfies
	$\|E_s^{(k+1)}\|
	\leq
	(\rho_{s_i,s}+L_s)
	\|E_s^{(k)}\|$.
	Therefore, the scale-matched ERF organization guarantees stable residual attenuation under nonlinear feature transformations.
\end{theorem}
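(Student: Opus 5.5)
The argument is essentially a one-step triangle-inequality estimate on the stated recursion, followed by induction over stages. At a fixed refinement stage $k$ and residual component $s$, I will split the update
\[
E_s^{(k+1)}=E_s^{(k)}-\eta_{s_i,s}\Phi_{s_i}(E_s^{(k)})+\xi_s(E^{(k)})
\]
into two pieces:
\begin{itemize}
\item the \emph{correction term} $E_s^{(k)}-\eta_{s_i,s}\Phi_{s_i}(E_s^{(k)})$;
\item the \emph{perturbation term} $\xi_s(E^{(k)})$.
\end{itemize}
Applying the triangle inequality gives
\[
\|E_s^{(k+1)}\|\le \bigl\|E_s^{(k)}-\eta_{s_i,s}\Phi_{s_i}(E_s^{(k)})\bigr\|+\|\xi_s(E^{(k)})\|.
\]

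I then bound each piece by a hypothesis of the theorem. The local contraction assumption bounds the correction term by $\rho_{s_i,s}\|E_s^{(k)}\|$. The local growth condition $\|\xi_s(E^{(k)})\|\le L_s\|E_s^{(k)}\|$ bounds the perturbation term. The theorem attributes this growth condition to Assumption~\ref{Ass:SSR} and Theorem~\ref{Thm:SMS}, which supply the finite discrepancy constants $\zeta_{k,s}$. Summing the two bounds gives
\[
\|E_s^{(k+1)}\|\le(\rho_{s_i,s}+L_s)\|E_s^{(k)}\|,
\]
which is exactly the claimed inequality.

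Under $q_s:=\rho_{s_i,s}+L_s<1$, this says each stage is contractive. If the same matched operator is used at every stage, induction on $k$ gives $\|E_s^{(k)}\|\le q_s^{k}\|E_s^{(0)}\|\to0$. For a V-cycle schedule the scale index $s_i$ varies with $k$, so I will state the result as
\[
\|E_s^{(K)}\|\le\Bigl(\prod_{k}(\rho_{s_{i_k},s}+L_s)\Bigr)\|E_s^{(0)}\|.
\]
Strict decay follows whenever every factor in the product is below $1$.

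The main obstacle is that the perturbation $\xi_s$ depends on the full residual $E^{(k)}$, not only on the component $E_s^{(k)}$. Justifying a bound in terms of $\|E_s^{(k)}\|$ alone requires the cross-scale contributions to vanish when $E_s^{(k)}$ does. In the statement this is simply assumed as a local Lipschitz-type condition, and the proof will take it as given. I will add a remark, however, that a bound derived honestly from Assumption~\ref{Ass:SSR} would involve $\|E^{(k)}\|$ summed over all components. In that case one obtains a coupled system of inequalities, and contraction of the whole residual would require the spectral radius of the matrix of constants to be below $1$.
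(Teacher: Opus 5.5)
Your proof is correct and follows the paper's proof: you split $E_s^{(k+1)}$ into the correction term $E_s^{(k)}-\eta_{s_i,s}\Phi_{s_i}(E_s^{(k)})$ and the perturbation $\xi_s(E^{(k)})$, apply the triangle inequality, bound the two pieces by $\rho_{s_i,s}\|E_s^{(k)}\|$ and $L_s\|E_s^{(k)}\|$ using the stated hypotheses, and sum. Your multi-stage product bound goes beyond what the paper proves, as does your remark that a $\xi_s$ depending on the full $E^{(k)}$ would really require a coupled system of inequalities; both are sound additions but not needed for the statement.
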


\begin{proof}
	According to the refinement formulation,
	\begin{equation}
		\begin{aligned}
			\|E_s^{(k+1)}\|
			&=
			\|
			E_s^{(k)}
			-
			\eta_{s_i,s}\Phi_{s_i}(E_s^{(k)})
			+
			\xi_s(E^{(k)})
			\|.
		\end{aligned}
	\end{equation}
	Applying the triangle inequality gives
	\begin{equation}
		\begin{aligned}
			\|E_s^{(k+1)}\|
			&\leq
			\left\|
			E_s^{(k)}
			-
			\eta_{s_i,s}\Phi_{s_i}(E_s^{(k)})
			\right\|
			+
			\|\xi_s(E^{(k)})\|.
		\end{aligned}
	\end{equation}
	According to the local contraction property of the scale-matched ERF correction,
	\begin{equation}
		\left\|
		E_s^{(k)}
		-
		\eta_{s_i,s}\Phi_{s_i}(E_s^{(k)})
		\right\|
		\leq
		\rho_{s_i,s}\|E_s^{(k)}\|,
	\end{equation}
	and the bounded nonlinear perturbation satisfies
	\begin{equation}
		\|\xi_s(E^{(k)})\|
		\leq
		L_s\|E_s^{(k)}\|.
	\end{equation}
	Combining the above inequalities yields
	\begin{equation}
		\begin{aligned}
			\|E_s^{(k+1)}\|
			&\leq
			(\rho_{s_i,s}+L_s)
			\|E_s^{(k)}\|.
		\end{aligned}
	\end{equation}
	When $\rho_{s_i,s}+L_s<1$, the refinement process becomes a contraction	mapping, indicating that the residual component decreases progressively.
\end{proof}
\noindent\textbf{Remark.}
Theorem~\ref{Thm:ERFBlock_refinement} establishes the stability criterion of ERFBlock refinement. Stable residual attenuation is guaranteed whenever the scale-matched contraction dominates nonlinear perturbation ($\rho_{s_i,s}+L_s<1$), independent of the specific operator realization. In practice, the contraction condition is implicitly enforced by the normalization layers and residual connections \cite{NEURIPS2019_7716d0fc}, and its validity is further supported by the stable convergence observed in our experiments.

\noindent\textbf{Complexity Analysis.}
All ERF scales share the same refinement function $\mathcal{F}$, yielding
\begin{equation}
	\mathcal{P}_{ERF}
	=
	\mathcal{P}(\mathcal{F}),
	\quad
	\frac{\partial \mathcal{P}_{ERF}}{\partial s}=0 .
\end{equation}
The computational cost of the V-cycle refinement is
\begin{equation}
	\mathcal{O}(\mathcal{V}_{s})
	=
	\sum_{k=1}^{K}
	\mathcal{O}(\mathcal{R}_{s_k}),
	\quad
	\mathcal{O}(\mathcal{R}_{s})
	=
	\mathcal{O}
	\left(
	\frac{HW}{4^{s}}C^{2}
	\right).
\end{equation}
Thus, the proposed ERF organization enlarges receptive fields with constant parameter complexity and limited computational overhead.

\subsection{Relation to Adaptive Receptive Field Learning}
Adaptive receptive field learning and RFONet optimize different architectural variables. Adaptive methods optimize the sampling geometry of individual operators \cite{xiong2024efficient},
\begin{equation}
	\mathbf y(p)
	=
	\sum_{i\in\Omega}
	w_i\mathbf x(p+\Delta p_i),
	\quad
	\Theta_{\rm adaptive}
	=
	\{\Delta p_i\},
\end{equation}
whereas RFONet keeps local operators unchanged and optimizes only the ERF scheduling trajectory,
\begin{equation}
	\mathcal V
	=
	\mathcal R_{s_K}
	\circ
	\cdots
	\circ
	\mathcal R_{s_1},
	\quad
	\Theta_{\rm RFONet}
	=
	\{s_1,\cdots,s_K\}.
\end{equation}
Hence, the former optimizes \emph{operator behavior}, whereas the latter optimizes \emph{operator organization}. For an equivalent receptive field radius $R$, adaptive methods enlarge the local sampling support,
\begin{equation}
	|\Omega|
	=
	\Theta(R^2),
	\quad
	\mathcal C_{\rm adaptive}
	=
	\Theta(HWC^2R^2),
\end{equation}
whereas RFONet enlarges receptive fields through hierarchical domain projection,
\begin{equation}
	R
	=
	2^sk,
	\quad
	\mathcal C_{\rm RFONet}
	=
	\Theta
	\left(
	\frac{HW}{4^s}C^2k^2
	\right).
\end{equation}
Therefore,
\begin{equation}
	\frac{\mathcal C_{\rm adaptive}}
	{\mathcal C_{\rm RFONet}}
	=
	\Theta
	\left(
	\frac{R^4}{k^4}
	\right),
\end{equation}
showing that equivalent ERFs can be obtained through fundamentally different computational mechanisms. More importantly, enlarging adaptive receptive fields simultaneously enlarges the optimization space,
\begin{equation}
	|\Omega|
	=
	\Theta(R^2)
	\Longrightarrow
	\dim(\Theta_{\rm adaptive})
	=
	\Theta(R^2),
\end{equation}
whereas
\begin{equation}
	\dim(\Theta_{\rm RFONet})
	=
	K,
\end{equation}
which is independent of the target ERF radius. Consequently, optimization difficulty increases together with ERF size for adaptive operators unless dedicated optimization techniques (\eg, re-parameterization \cite{11069297}) are introduced, whereas RFONet enlarges receptive fields by organizing refinement trajectories while preserving fixed local operators. Therefore, adaptive receptive field learning and ERF organization should be viewed as complementary optimization paradigms rather than alternative implementations for enlarging receptive fields.

\subsection{Loss Function}
To handle the severe imbalance between target and background pixels, the Soft-IoU loss is adopted to optimize the predicted confidence map:
\begin{equation}
	\mathcal{L}_{\mathrm{IoU}}
	=
	1-
	\frac{\sum_{i,j}\mathbf{X}_{i,j}\mathbf{Y}_{i,j}}
	{\sum_{i,j}\mathbf{X}_{i,j}+\mathbf{Y}_{i,j}
		-\mathbf{X}_{i,j}\mathbf{Y}_{i,j}},
\end{equation}
where $\mathbf{X}$ and $\mathbf{Y}$ denote the predicted confidence map and ground truth, respectively. In addition, an $\ell_1$ sparsity regularization is introduced to constrain false activations:
\begin{equation}
	\mathcal{L}
	=
	\mathcal{L}_{\mathrm{IoU}}
	+
	\lambda
	\|\mathbf{X}\|_1 ,
\end{equation}
where $\lambda$ is the regularization coefficient, set to 0.1.

\section{Experiment}\label{Section:Experiment}
\subsection{Experimental Setup}
\subsubsection{Dataset} We evaluate our method on four widely used IRSTD benchmarks: NUDT-SIRST \cite{DNANet}, IRSTD-1K \cite{ISNet}, SIRST \cite{ALCNet}, and SIRST-Aug \cite{AGPCNet}. NUDT-SIRST contains 1,327 images with a standard 50:50 split, IRSTD-1K includes 1,000 real-world images with an 80:20 split, and SIRST provides 427 images. SIRST-Aug further expands SIRST to 9,070 images through augmentation with 8,525 training and 545 testing samples. Following official protocols, all experiments use a fixed input resolution of \(256\times256\).

\subsubsection{Evaluation Metrics} For evaluation, we employ both pixel-level and target-level metrics. Specifically, intersection over union (IoU) and the \(F\)-measure (\(F_1\)) are used for pixel-level accuracy evaluation, while the probability of detection (\(P_d\)) and false alarm rate (\(F_a\)) are adopted to assess target-level detection performance.

\subsubsection{Implementation Details}  The network is implemented in PyTorch 1.8.2 and trained with Adam on an NVIDIA A100 GPU using a learning rate of $1\times10^{-4}$, poly decay, batch size 8, and 800 epochs. Images are normalized to [0,1].

\subsection{Comparison with State-of-the-Arts}
\begin{table*}[t!]
	\caption{Detection results achieved by different SOTA methods. The best and second results are in \first{red} and \textcolor{blue}{\textbf{blue}}, respectively. The metrics considered include mIoU ($10^{-2}$), $F_1$ ($10^{-2}$), $P_d$ ($10^{-2}$), $F_a$ ($10^{-5}$), Times ($256\times256$ with batchsize 1).}\label{Tab:SOTA}
	\centering
	\setlength{\tabcolsep}{2.5pt}
	\resizebox{\textwidth}{!}{\begin{tabular}{l|c|cccc|cccc|cccc|cccc|cc}
			\noalign{\hrule height 1pt}
			\multicolumn{1}{l|}{\multirow{2}*{Methods}} & \multicolumn{1}{c|}{\multirow{2}*{Venue}} & \multicolumn{4}{c|}{\textbf{IRDST-1K}} & \multicolumn{4}{c|}{\textbf{NUDT-SIRST}}
			& \multicolumn{4}{c|}{\textbf{SIRST}}
			& \multicolumn{4}{c|}{\textbf{SIRST-Aug}} & \multirow{1}*{Params}  & \multirow{1}*{Times(s)} \\
			\cline{3-18}
			\multicolumn{1}{l|}{~} &\multicolumn{1}{c|}{~} & IoU & $F_1$ & $P_d$ & $F_a$ & IoU & $F_1$ & $P_d$ & $F_a$ & IoU & $F_1$ & $P_d$ & $F_a$ & IoU & $F_1$ & $P_d$ & $F_a$ & (M) & 3080TI         \\ \noalign{\hrule height 1pt}
			ISNet \cite{ISNet} & CVPR'22 & 63.00 & 77.29 & 88.66 & 5.52 & 87.05 & 93.08 & 96.83 & 4.05 & 66.96 & 80.20 & 96.33 & 9.85 & 71.10 & 83.11 & 97.66 & 30.33 & \second{0.967} & 0.0270 \\
			DNANet \cite{DNANet} & TIP'23 & 62.23 & 76.72 & 92.44 & 5.64 & 87.73 & 93.47 &97.67 & 5.52 & 73.12 & 84.48 & \first{100.0} & 10.26 & 70.56 & 82.74 & 96.56 & 36.49 & 4.697 & 0.0437 \\
			UIUNet \cite{UIUNet} & TIP'23 & 63.08 & 77.36 & 92.10 & 6.09 & 88.94 & 94.15 & 95.23 & 1.53 & 72.15 & 83.82 & 98.16 & 7.90 & 70.76 & 82.88 & 96.70 & 34.30 & 50.54 & 0.0541 \\
			AGPCNet \cite{AGPCNet} & TAES'23 & 60.44 & 75.34 & 90.72 & 6.10 & 85.02 & 91.90 & 97.88 & 4.34 & 68.19 & 81.08 & \second{99.08} & 12.09 & 72.07 & 83.77 & 98.62 & 29.39 & 12.36 & 0.1173 \\
			IRSAM \cite{IRSAM} & ECCV'24 & 63.58 & 77.74 & 92.10 & 4.21 & 87.85 & 93.53 & \first{99.05} & 3.21 & 67.97 & 80.93 & \second{99.08} & 8.34 & 74.70 & 85.52 & 99.03 & 25.94 & 12.33 & 0.0131 \\
			FreqFusion \cite{chen2024frequency} & TPAMI'24 &55.00 &70.97 &94.50 &8.70 & 83.19 & 90.82 & 98.52 & 6.70 & 58.92 & 74.15 & 93.58 & 10.20 & 59.79 & 74.83 & 92.71 & 27.00 & 35.99 & 0.0209 \\
			MSHNet \cite{MSHNet} & CVPR'24 &64.83 &78.67 &92.78 & 7.14 &75.96 &86.34 & 95.23 &8.28 &64.64 & 78.52 & 98.16 &15.66 & 71.93 & 83.67 & 99.03 & 32.06 & 4.065 & 0.0483 \\
			DRPCANet \cite{TGRS2025DRPCANet} & TGRS'25 & 64.14 & 78.15 & 92.09 & 17.92 &94.16 & 96.99 & 98.41 & 2.553 & \second{75.52} & \second{86.05} & \second{99.08} & 3.730 & \first{76.50} & \first{86.69} & 98.62 & 30.60 & 1.169 & \second{0.0101}\\
			PConv \cite{yang2025pinwheel} & AAAI'25 &64.25 &78.23 & 93.81 & 6.10 & 81.69 & 89.92 & 98.62 & 6.60 & 58.90 & 74.14 & 94.50 & 9.60 & 70.93 & 82.99 & 98.21 & 36.20 & 4.064 & 0.0486 \\
			UKAN \cite{li2025u} & AAAI'25 & 56.61 & 72.30 & 92.10 & 7.90 & 80.23 & 89.03 & 98.31 & 7.60 & 62.82 & 77.17 & 96.33 & 9.00 & 66.25 & 79.70 & 94.50 & 26.80 & 9.384 & 0.0225\\
			RSFUNet \cite{11014512} & TNNLS'25 &54.64 & 70.67 & 92.44 & 8.40 & 92.35 & 96.02 & 98.73 & 2.30 & 61.13 & 75.87 & 95.41 & 11.40 & 72.24 & 83.88 & 98.07 & 36.20 & 1.678 & 0.1982 \\
			nnWNet \cite{zhou2025nnwnet} & CVPR'25 & 60.43 & 75.33 & \first{96.22} & 9.20 & 91.01 & 95.29 & 98.73 & 2.00 & 60.96 & 75.75 & 95.41 & 12.30 & 71.91 & 83.66 & 98.35 & 30.20 & 7.041 & 0.0223 \\
			SRSNet \cite{11174084} & TIP'25 & 54.95 & 70.93 & 90.03 & 5.90 & 91.76 & 95.70 & 99.05 & 2.50 & 58.38 & 73.72 & 95.41 & 11.60 & 67.84 & 80.84 & 94.64 & 27.20 & \first{0.733} & 0.1086 \\
			SENet \cite{10844040} & TIP'25 & 43.95 & 61.06 & 80.75 & \second{2.41}&57.96&73.38&91.00&4.31 &48.40 & 65.30&90.40&1.497 &70.43&82.65&99.17&3.281&152.6&0.0238\\
			FDConv \cite{chen2025frequency} & CVPR'25 & 57.47 & 73.00 & 93.13 & 6.20 & 91.94 & 95.80 & 96.30 & 1.70 & 57.27 & 72.83 & 96.33 & 10.80 & 70.57 & 82.75 & 98.76 & 28.00 & 1.476 & 0.0826 \\
			TherNet \cite{chen2025thernet} & TPAMI'25 & 49.15 & 65.90 & \second{95.19} & 14.39 &67.20 & 80.38 & 97.46 & 14.92 & 37.79 & 54.86 & 94.50 & 46.69 & 61.31 & 76.01 & 97.80 & 29.62 & 84.06 & 0.0870\\
			CSSAM \cite{11297835} &TPAMI'26 &49.82&66.51&90.24&8.29&57.72&73.19&95.13&2.72&75.98&86.35&98.10&\first{0.506}&73.98&85.04&99.59&\first{0.608}&230.0&0.0285\\
			RPCANet$^{++}$ \cite{11537388} & TPAMI'26 & \second{64.93} & \second{78.73} & 89.70 & 4.35 & \second{94.39} & \second{97.12} & 98.41 & 1.34 & 74.76 & 85.47 & \first{100.0} & 10.77 & 74.89 & 85.44 & 98.76 & 28.00 & 2.915 & 0.0471 \\
			GBNet \cite{11515007} & TIP'26 &46.65&63.62&91.40&2.665&76.74&86.64&98.09&\first{0.685}&60.37&75.29&95.43&\second{2.210}&72.86&84.30&\second{99.31}&\second{1.506}&77.10&0.0500\\
			DHiF \cite{11373245} & TCSVT'26 &60.25&75.20&90.38&3.609&78.31&87.83&96.51&4.858&65.27&78.99&94.30&4.267&69.87&82.26&\first{99.45}&18.36 &4.050&0.0485\\
			MCFNet \cite{11568953} & TIP'26 &63.38&77.58&89.00&\first{2.274}&91.39&95.50&98.31&\second{1.232}&65.75&79.33&97.34&6.400&75.28&85.90&99.17&5.412&10.98&0.0454\\
			NS-FPN \cite{yuan2026seeing} & CVPR'26 &61.29&76.00&89.69&3.844&80.65&89.29&96.93&3.645&64.89&78.71&95.06&6.273&74.66&85.49&99.17&8.553&4.332&0.0252\\\noalign{\hrule height 1pt}
			RFONet & Ours'26 & \first{72.14} & \first{83.82} & 92.26 & 4.18 &\first{95.23}&\first{97.56}&\second{98.94}&1.278&\first{77.99}&\first{87.63}&95.45&5.382&\second{75.93}&\second{86.32}&98.76&2.228&1.160&\first{0.0063}\\
			\noalign{\hrule height 1pt}
	\end{tabular}}
\end{table*}
\subsubsection{Quantitative Evaluation}
IRSTD-1K exhibits pronounced multi-scale target characteristics, with 18\% of targets occupying areas of (0, 10] pixels, 46\% falling within (10, 40] pixels, 27\% within (40, 100] pixels, and 9\% exceeding 100 pixels. SIRSTAUG mainly consists of typical small targets, where 97\% of targets have pixel areas within (0, 10]. NUDT-SIRST presents a broader cross-scale distribution, with 25\% of targets occupying (0, 10] pixels, 44\% within (10, 40] pixels, and 30\% within (40, 100] pixels, while containing few extremely large targets. In the SIRST dataset, 55\% of targets have pixel areas in the (0, 10] range, while the remaining targets are relatively evenly distributed across other scales.
\begin{prop}[Multi-scale Target Response Coverage of V-cycle]
	\label{prop:multi_scale_target}
	Let \(T_s\) denote the target component at spatial scale \(s\) within the
	ERF hierarchy, and let \(\Phi_k\) be the ERFBlock with scale \(k\). Define
	the response gain as
	$g_{k,s}=\frac{\|\Phi_k(T_s)\|}{\|T_s\|}$.
	Consistent with the scale-frequency correspondence established in Property~\ref{prop:ERF_frequency}, the scale-aligned operator provides the largest expected response:
	$\mathbb{E}[g_{s,s}]>
	\mathbb{E}[g_{k,s}],\quad k\neq s$.
	For a single-scale organization with fixed scale \(f\neq s\), $G_{\mathrm{single}}=\mathbb{E}[g_{f,s}]$. In contrast, the V-cycle trajectory \(\mathcal{S}_V=\{1,2,\dots,S,\dots,1\}\) covers all ERF scales. Hence,
	$G_{\mathrm{vc}}
	=
	\mathbb{E}
	\left[
	\max_{k\in\mathcal{S}_V}g_{k,s}
	\right]
	\geq
	\mathbb{E}[g_{s,s}]
	>
	G_{\mathrm{single}}$.
	Therefore, V-cycle guarantees that each target scale encounters its	scale-aligned ERF transformation, avoiding the scale limitation of fixed single-scale refinement.
\end{prop}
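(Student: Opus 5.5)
The chain $G_{\mathrm{vc}}\geq\mathbb{E}[g_{s,s}]>G_{\mathrm{single}}$ splits into two inequalities. I would take the first from a pointwise maximum bound and monotonicity of expectation, and the second directly from the scale-alignment hypothesis. The strict inequality $\mathbb{E}[g_{s,s}]>\mathbb{E}[g_{k,s}]$ for $k\neq s$ is posited in the statement as a consequence of Property~\ref{prop:ERF_frequency}. I would treat it as a premise rather than re-derive it. To motivate it I would only recall that $\hat{\rho}_\sigma(\boldsymbol{\omega})=\hat{\rho}(\sigma\boldsymbol{\omega})$: the passband of $\Phi_k$ shifts with $k$, so it overlaps most with the band $\Omega_s$ where $T_s$ is concentrated when $k=s$. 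This is in the same spirit as the $\alpha_s<\beta_{s,j}$ gap in Assumption~\ref{Ass:SSR}.

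\textbf{Step 1 (coverage).} First I would verify that $s\in\mathcal{S}_V$ for every target scale $s\in\{1,\dots,S\}$. This holds by construction, since $\mathcal{S}_V=\{1,2,\dots,S,\dots,1\}$ runs through every integer level up to $S$; it is the same trajectory $\mathcal{S}_s$ used to define $\mathcal{V}_s$ in \eqref{Eq:s}.

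\textbf{Step 2 (pointwise bound).} Since $s\in\mathcal{S}_V$, for every realization of $T_s$ (and of any randomness in $\Phi_k$) we have $\max_{k\in\mathcal{S}_V}g_{k,s}\geq g_{s,s}$. The gains $g_{k,s}$ are nonnegative and, by the boundedness assumed in Assumption~\ref{Ass:SSR}, integrable. Taking expectations therefore gives $G_{\mathrm{vc}}\geq\mathbb{E}[g_{s,s}]$.

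\textbf{Step 3 (strict gap).} Applying the scale-alignment premise with $k=f\neq s$ gives $\mathbb{E}[g_{s,s}]>\mathbb{E}[g_{f,s}]=G_{\mathrm{single}}$. Chaining this with Step 2 yields the claim.

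The main obstacle is interpretive rather than technical. The quantity $G_{\mathrm{vc}}$ is defined as the expected best single-scale gain along the trajectory. It is not the gain of the composed operator $\mathcal{V}_s=\mathcal{R}_{s_K}\circ\cdots\circ\mathcal{R}_{s_1}$. By Property~\ref{prop:ERF_noncomm}, intermediate stages alter the input seen by later operators. So I would state explicitly that the proposition concerns \emph{coverage}: every target scale is visited by its aligned operator somewhere along the V-cycle. It does not bound the end-to-end response. If a composed statement were required, the natural route would be to use the residual form $\mathcal{R}_s=I+\mathcal{T}_s$ and bound the interference from the other stages via the $\zeta_{k,s}$ discrepancy in Theorem~\ref{Thm:SMS}. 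I would flag that extension as beyond the present claim.
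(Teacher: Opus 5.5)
Your proposal is correct and follows essentially the same route as the paper. The paper likewise uses $s\in\mathcal{S}_V$ to get the pointwise bound $\max_{k\in\mathcal{S}_V}g_{k,s}\geq g_{s,s}$, takes expectations, and then invokes the scale-alignment inequality (attributed to Property~\ref{prop:ERF_frequency}) with $k=f$ to obtain the strict gap; your caveat that $G_{\mathrm{vc}}$ measures coverage by individual operators rather than the gain of the composed $\mathcal{V}_s$ is a fair clarification the paper omits, but it does not change the argument.
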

\begin{proof}
	Since \(s\in\mathcal{S}_V\), we have
	\begin{equation}
		\max_{k\in\mathcal{S}_V}g_{k,s}\geq g_{s,s}.
	\end{equation}
	Taking expectations and applying Property~\ref{prop:ERF_frequency} yields
	\begin{equation}
		G_{\mathrm{vc}}
		\geq
		\mathbb{E}[g_{s,s}]
		>
		\mathbb{E}[g_{f,s}]
		=
		G_{\mathrm{single}} .
	\end{equation}
	Thus, V-cycle provides a larger achievable response gain than any scale-mismatched single-scale organization.
\end{proof}
\noindent\textbf{Remark.} Proposition \ref{prop:multi_scale_target} theoretically indicates that the V-cycle-based ERF organization enables comprehensive coverage of multi-scale target responses through hierarchical receptive field refinement. 

\begin{figure}
	\centering
	\includegraphics[width=0.9\linewidth]{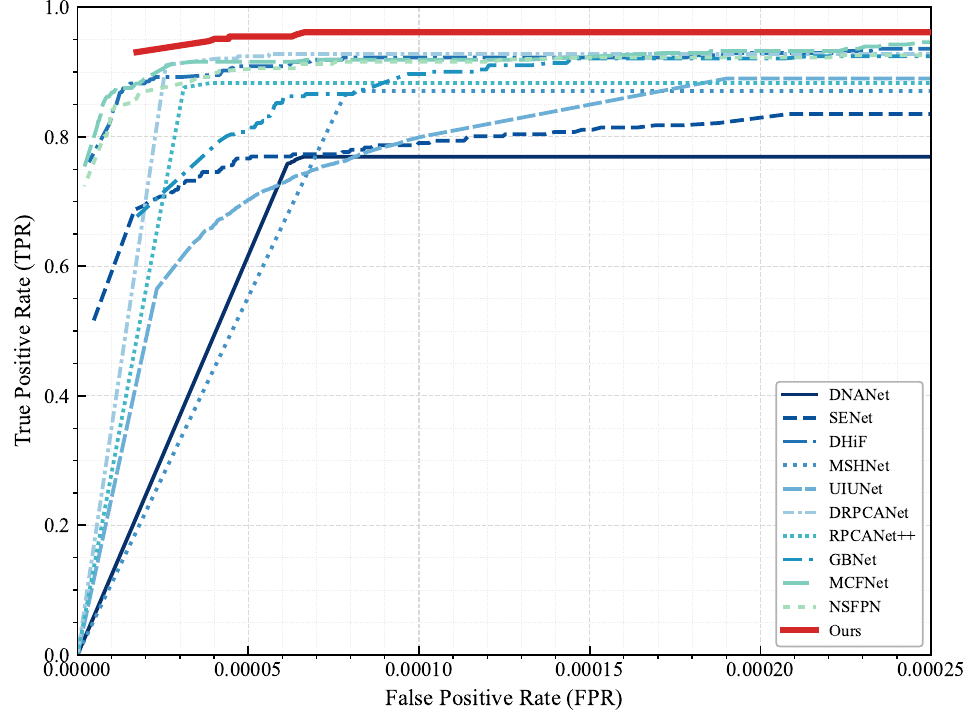}
	\caption{ROC curves of our RFONet and other approaches on IRSTD-1k.}
	\label{fig:ROC}   
\end{figure}
As reported in Tab. \ref{Tab:SOTA}, the proposed method consistently achieves superior performance on four multi-scale target benchmarks, despite requiring only 1.16M parameters. In particular, the remarkable performance on IRDST-1K, which contains the most challenging scale variations, further demonstrates the effectiveness of the proposed ERF scheduling strategy in capturing targets across different spatial scales. This empirical evidence is consistent with the theoretical analysis and highlights the advantage of V-cycle ERF organization in achieving parameter-efficient multi-scale representation learning. To further verify this, we present the ROC curves in Fig. \ref{fig:ROC}, evaluated on the IRSTD-1k dataset, where our approach achieves the best AUC score.

\subsubsection{Qualitative Evaluation}Infrared targets generally present diverse spatial scales and are often partially occluded in real-world scenarios. To investigate the robustness of the proposed ERF organization under such conditions, we first establish a theoretical analysis (Corollary~\ref{cor:occlusion}) showing that V-cycle refinement preserves scale-consistent target responses despite moderate occlusion. Qualitative comparisons are then provided to validate this property in challenging occluded scenes.
\begin{corollary}[Occlusion Robustness of V-cycle ERF Refinement]
	\label{cor:occlusion}
	Assume that the infrared imaging process before saturation follows the additive model
	\begin{equation}
		X=T+B+\epsilon,
	\end{equation}
	where \(T\), \(B\), and \(\epsilon\) denote the target response, background, and noise, respectively. Under partial occlusion, the observation becomes
	\begin{equation}
		X_o=M\odot T+B+\epsilon,
	\end{equation}
	where \(M\in[0,1]\) denotes the occlusion mask.	For compact targets, the dominant frequency characteristics are governed by	target scale. Moderate occlusion mainly suppresses response magnitude while	approximately maintaining the scale-frequency correspondence:
	\begin{equation}
		\mathbb{E}[g_{s,s}(T_o)]
		>
		\mathbb{E}[g_{k,s}(T_o)],
		\quad k\neq s,
	\end{equation}
	where \(T_o=M\odot T\).	According to Proposition~\ref{prop:multi_scale_target}, the V-cycle	trajectory \(\mathcal{S}_V=\{1,2,\dots,S,\dots,1\}\) contains the scale-aligned ERF operator for any target scale \(s\). Hence,
	\begin{equation}
		\mathbb{E}
		\left[
		\max_{k\in\mathcal{S}_V}g_{k,s}(T_o)
		\right]
		\geq
		\mathbb{E}[g_{s,s}(T_o)] .
	\end{equation}
	Therefore, V-cycle ERF refinement preserves scale-consistent target responses under partial occlusion by ensuring that the scale-aligned operator remains available, thereby avoiding degradation that would arise from a fixed mismatched scale.
\end{corollary}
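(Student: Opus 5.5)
The plan is to reduce the corollary to the argument of Proposition~\ref{prop:multi_scale_target}, with $T_s$ replaced by the occluded component $T_o=M\odot T$. The only new ingredient is that the scale-alignment hypothesis $\mathbb{E}[g_{s,s}]>\mathbb{E}[g_{k,s}]$ still holds after occlusion. The statement supplies this as the working assumption for moderate occlusion, so the proof has two parts. First, I will argue that this assumption is consistent with the earlier results. Second, I will repeat the max-over-trajectory argument.

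\emph{Step 1 (occlusion preserves the dominant scale).} For a compact target supported on a region of diameter comparable to the scale-$s$ ERF, write $T_o=T-(1-M)\odot T$. Since $0\le M\le 1$, we have $\operatorname{supp}(T_o)\subseteq\operatorname{supp}(T)$, and hence $\|T_o\|\le\|T\|$. I will interpret moderate occlusion as the condition that the removed part $(1-M)\odot T$ is small relative to $T$, or that $M$ is smooth at the target scale. Under this condition, the spectrum of $T_o$ is the spectrum of $T$ convolved with $\hat M$, which stays concentrated near $\Omega_s$. Property~\ref{prop:ERF_frequency} then says that the ERF whose frequency response $\hat\rho(\sigma_s\boldsymbol\omega)$ is matched to $\Omega_s$ still gives the largest gain. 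Because $g_{k,s}$ is a ratio, the overall attenuation of magnitude cancels, and only the spectral shape matters. This is the step I expect to be the main obstacle. A sharp-edged mask $M$ introduces high-frequency leakage, so the inequality cannot be derived in full generality. I would state it as the hypothesis given in the corollary and justify it only heuristically through the convolution picture above.

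\emph{Step 2 (coverage).} Since $\mathcal{S}_V=\{1,\dots,S,\dots,1\}$ contains every scale index, for any target scale $s\le S$ we have $s\in\mathcal{S}_V$. Therefore, pointwise for every realization of $T_o$,
\begin{equation}
\max_{k\in\mathcal{S}_V}g_{k,s}(T_o)\ge g_{s,s}(T_o).
\end{equation}
Monotonicity of expectation then gives the displayed inequality. Combining this with Step 1, for any fixed mismatched single scale $f\neq s$ we obtain
\begin{equation}
\mathbb{E}\Big[\max_{k\in\mathcal{S}_V}g_{k,s}(T_o)\Big]\ge\mathbb{E}[g_{s,s}(T_o)]>\mathbb{E}[g_{f,s}(T_o)].
\end{equation}
This is exactly the conclusion of Proposition~\ref{prop:multi_scale_target} with $T_s$ replaced by $T_o$. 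Put differently, the V-cycle response under occlusion dominates the response of a fixed mismatched scale.

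Finally, I will note that $B$ and $\epsilon$ enter $X_o$ in the same way as in $X$. The contraction bound of Theorem~\ref{Thm:ERFBlock_refinement} for the residual components therefore applies unchanged, so occlusion affects only the target-response term handled in Steps 1 and 2.
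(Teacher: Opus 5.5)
Your proposal is correct and follows essentially the same route as the paper: the preserved scale-frequency inequality for $T_o=M\odot T$ is taken as the moderate-occlusion hypothesis. The conclusion then follows from $s\in\mathcal{S}_V$, which gives $\max_{k\in\mathcal{S}_V}g_{k,s}(T_o)\ge g_{s,s}(T_o)$ pointwise, together with monotonicity of expectation, exactly as in Proposition~\ref{prop:multi_scale_target}. Your Step~1 heuristics and the closing remark about Theorem~\ref{Thm:ERFBlock_refinement} are not needed and can be dropped; the latter is also unjustified, since nonlinear blocks do not separate the target and background contributions.
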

\begin{figure*}
	\centering
	\includegraphics[width=\linewidth]{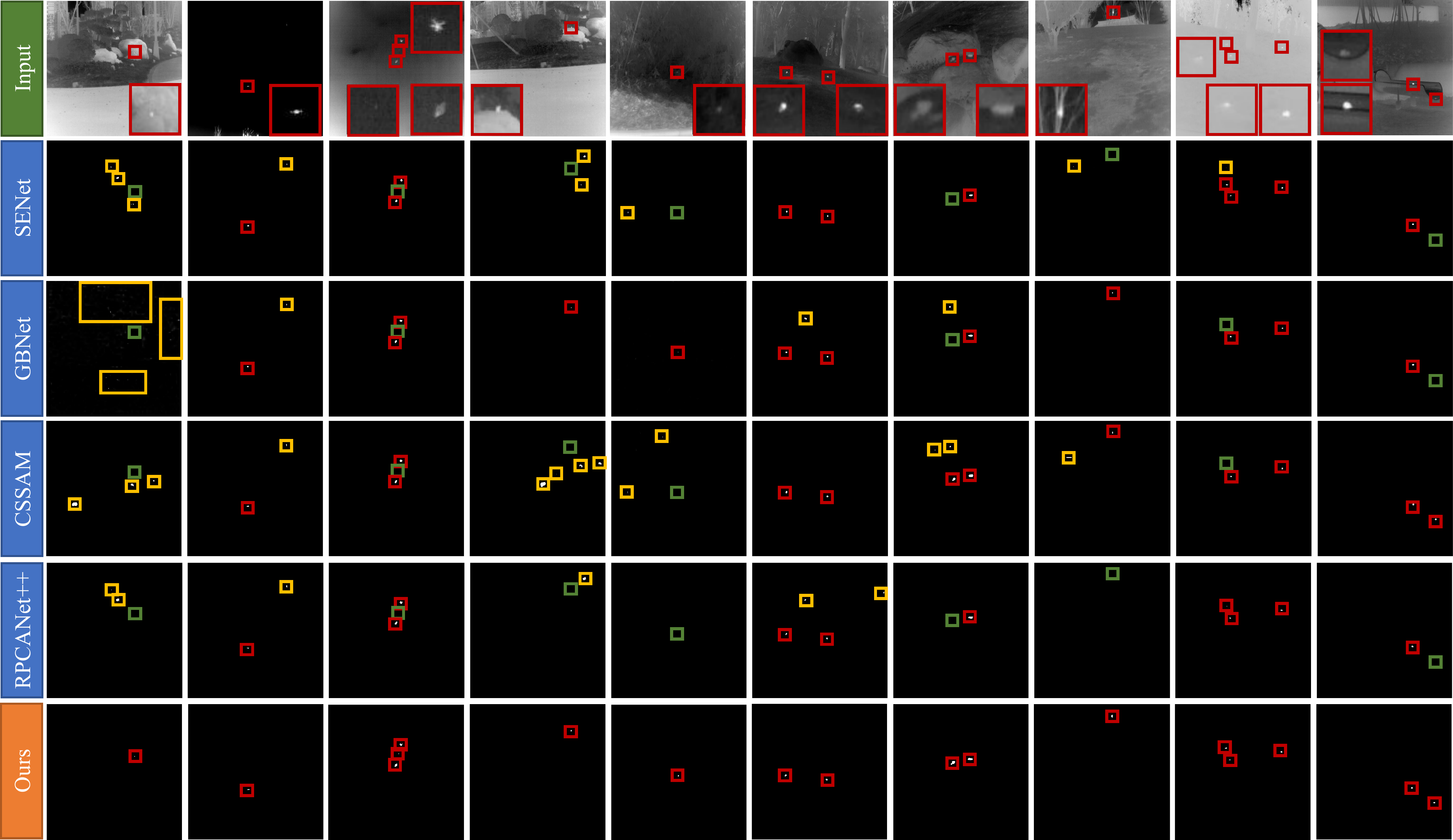}
	\caption{Visualization comparison of detection results between Ours and the other models across various challenging scenarios. Correct detections, false alarms, and missed detections are indicated by \textcolor{red}{red}, \textcolor[rgb]{1.0, 0.75, 0.0}{yellow}, and \textcolor[rgb]{0.33, 0.50, 0.20}{green} bounding boxes, respectively.}
	\label{fig:visual_more}   
\end{figure*}
As shown in Fig. \ref{fig:visual_more}, the proposed method demonstrates superior performance across various challenging scenarios. It effectively adapts to targets of varying sizes, signal-to-clutter ratios and occluded scenes.

\subsection{Analysis on Noise Tolerance}
To evaluate robustness under degraded imaging conditions, we conduct noise tolerance analysis. Considering that Gaussian-like noise increases with infrared detector sensitivity degradation, we adopt the NoisySIRST dataset \cite{SeRankDet} with different noise levels to simulate low-quality imaging scenarios. Furthermore, we introduce the Parameter Contribution Ratio (PCR), \(\mathrm{PCR}=\mathrm{mIoU}/\mathrm{Params}(M)\), to evaluate parameter efficiency by measuring the performance gain per parameters.
\begin{prop}[Perturbation Stability of V-cycle ERF Refinement]
	\label{prop:vcycle_stability}
	Consider a V-cycle ERF refinement sequence $\{r_1,r_2,\dots,r_K\}$, where the finest-scale operator	$r_f$ is repeatedly applied in a single-scale refinement scheme.
	Let $J_{r_k}$ and $J_f$ denote the Jacobian matrices of the corresponding ERF transformations. If
	$\|J_{r_k}\|_2\leq \|J_f\|_2,
	\quad k=1,\dots,K$,
	then the accumulated perturbation amplification of the V-cycle satisfies
	$\prod_{k=1}^{K}
	(1+\|J_{r_k}\|_2)
	\leq
	(1+\|J_f\|_2)^K $.
	This condition reflects that fine-scale operators have larger local sensitivity, while larger ERFs suppress perturbations through broader context aggregation. Hence, progressive ERF scaling mitigates the accumulation of local perturbations.
\end{prop}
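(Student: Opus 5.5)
The inequality follows from a termwise comparison of nonnegative factors. The work lies in justifying why the growth factor $1+\|J\|_2$ measures perturbation amplification; the inequality itself is immediate once that model is set up.

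\emph{First step: the perturbation model.} Consider the refinement trajectory $\mathbf F^{(k)}=\Phi_{r_k}(\mathbf F^{(k-1)})$, realized through the residual form $\mathcal R_s(\mathbf X)=\mathbf X+\mathcal T_s(\mathbf X)$ introduced for the ERFBlock. Let $\delta^{(k)}$ denote the perturbation propagated to stage $k$. Linearize each step: by the mean value theorem, or by a first-order Taylor expansion on the compact domain invoked in the Remark after Assumption~\ref{Ass:SSR}, we have
\[
\delta^{(k)}\approx (I+J_{r_k})\,\delta^{(k-1)}.
\]
Submultiplicativity and the triangle inequality then give
\[
\|\delta^{(k)}\|\le (1+\|J_{r_k}\|_2)\,\|\delta^{(k-1)}\|.
\]
Unrolling over $K$ steps yields
\[
\|\delta^{(K)}\|\le \prod_{k=1}^{K}(1+\|J_{r_k}\|_2)\,\|\delta^{(0)}\|.
\]
This identifies $\prod_k(1+\|J_{r_k}\|_2)$ as the accumulated amplification of the V-cycle. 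The same argument with every factor equal to $1+\|J_f\|_2$ bounds the single-scale scheme that repeats $r_f$ $K$ times.

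\emph{Second step: the comparison.} From the hypothesis $\|J_{r_k}\|_2\le\|J_f\|_2$ we get
\[
0<1+\|J_{r_k}\|_2\le 1+\|J_f\|_2 \quad\text{for each } k.
\]
Since all factors are positive, multiplying these $K$ inequalities gives the claim. A short induction on $K$ makes this rigorous, using $a\le b$ and $c\le d$ with $a,c\ge0$ implies $ac\le bd$.

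\emph{Third step: the interpretation.} For the remark that larger ERFs suppress perturbations, invoke Property~\ref{prop:ERF_frequency}. Pooling to scale $2^s$ and then upsampling acts as a low-pass aggregation, so it does not increase local sensitivity. Since the operator $\mathcal F$ is shared across scales, the Jacobian of $\Pi_s^{-1}\circ\mathcal F\circ\Pi_s$ factors through the nonexpansive max pooling and nearest-neighbor interpolation. The spectral norm of the max pooling factor is at most $1$. Nearest-neighbor interpolation, however, copies each value to $4^s$ locations, which makes it expansive in the $\ell_2$ norm unless the spatial normalization by $4^{s}$ is absorbed. Either way, this argument is only heuristic support for the hypothesis, not a proof of it.

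The main obstacle is exactly this justification of the hypothesis $\|J_{r_k}\|_2\le\|J_f\|_2$. The stated result assumes it, so the formal proof does not depend on it, and I would present it as a modeling condition rather than a derived fact. A lesser technical point is that the linearization is only first order. I would state the result for the linearized perturbation dynamics, or use local Lipschitz constants in place of Jacobian norms so that the bound becomes exact.
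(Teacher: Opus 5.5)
Your proposal is correct and follows the paper's proof: the paper likewise invokes a first-order perturbation bound $\|\Delta E^{(K)}\|\le\prod_{k=1}^{K}(1+\|J_{r_k}\|_2)\|\Delta E^{(0)}\|$, then compares factors termwise via $1+\|J_{r_k}\|_2\le 1+\|J_f\|_2$ and multiplies. Your additional points go beyond what the paper states but do not change the argument: the explicit positivity of the factors, the caveat that the bound is only first order, and the observation that the hypothesis is a modeling assumption rather than a consequence of the pooling and upsampling structure.
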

\begin{proof}
	According to first-order perturbation analysis, the perturbation propagation through the refinement sequence is bounded by
	\begin{equation}
		\|\Delta E^{(K)}\|
		\leq
		\prod_{k=1}^{K}
		(1+\|J_{r_k}\|_2)
		\|\Delta E^{(0)}\|.
	\end{equation}
	Under the condition
	$\|J_{r_k}\|_2\leq\|J_f\|_2$,
	each refinement stage satisfies
	\begin{equation}
		1+\|J_{r_k}\|_2
		\leq
		1+\|J_f\|_2 .
	\end{equation}
	Taking the product over all refinement stages yields
	\begin{equation}
		\prod_{k=1}^{K}
		(1+\|J_{r_k}\|_2)
		\leq
		(1+\|J_f\|_2)^K .
	\end{equation}
	Hence, the V-cycle refinement produces a smaller worst-case perturbation amplification bound compared with repeatedly applying the finest-scale
	operator.
\end{proof}
\noindent\textbf{Remark.} Proposition \ref{prop:vcycle_stability} indicates that robustness does not originate from simply enlarging ERFs, but from organizing ERFs along the refinement trajectory. The V-cycle schedule progressively introduces contextual constraints, allowing structured residuals to be preserved while reducing inconsistent responses caused by random perturbations.
\begin{table}[t]
	\footnotesize
	\centering
	\caption{Analysis on Noise Tolerance}
	\label{tab:noise}
	\setlength{\tabcolsep}{4.3pt}
	\begin{tabular}{c|cc|cc|cc}
		\noalign{\hrule height 1pt}
		\multirow{2}{*}{Methods} & \multicolumn{2}{c|}{$\sigma_n=10$}  & \multicolumn{2}{c|}{$\sigma_n=20$}   & \multicolumn{2}{c}{$\sigma_n=30$} \\ \multicolumn{1}{l|}{} 
		&  IoU(\%)$\uparrow$ &  PCR$\uparrow$ & IoU(\%)$\uparrow$ &  PCR$\uparrow$ & IoU(\%)$\uparrow$ &  PCR$\uparrow$ \\
		\noalign{\hrule height 1pt}
		DHiF  & 64.99 & 16.04 & 64.12 & 15.83 & 63.57  & 15.69 \\
		GBNet  & 59.02 & 0.765 & 52.09 & 0.675 & 50.21 & 0.651\\
		MCFNet & \second{67.18} & 6.118 & 62.44 & 5.686 & 61.95 & 5.642\\
		NS-FPN  & 64.98 & \second{15.00} & \second{64.17} & \second{14.81} & \second{64.12}  & \second{14.80} \\
		RFONet & \first{74.17} & \first{63.93} & \first{73.33} & \first{63.21} & \first{69.13} &\first{59.59}\\
		\noalign{\hrule height 1pt}
	\end{tabular} \label{Tab:Noisy}
\end{table}

As reported in Tab. \ref{Tab:Noisy}, increasing noise intensity leads to apparent performance degradation for most competing methods, while the proposed method exhibits remarkable stability under severe noise conditions. Notably, the superior PCR performance indicates that the proposed model achieves enhanced robustness under a limited parameter budget. This empirical finding aligns well with Proposition \ref{prop:vcycle_stability}, which theoretically demonstrates that the V-cycle-based ERF scheduling strategy improves feature refinement stability by progressively integrating multi-scale contextual information and suppressing noise-induced perturbations. Consequently, the experimental results provide further evidence supporting the effectiveness of ERF ordering for improving robustness against noisy interference.
\subsection{Cross-Dataset Generalization Evaluation}
\begin{table}
	\centering
	\caption{Cross-Dataset Generalization Evaluation.}
	\setlength{\tabcolsep}{3.5mm}{
		\begin{tabular}{c|cc|cc}
			\noalign{\hrule height 1pt}
			\multirow{2}{*}{Methods} & \multicolumn{2}{c|}{\begin{tabular}[c]{@{}c@{}}IRDST-1K(Train)\\ NUDT-SIRST(Test)\end{tabular}} & \multicolumn{2}{c}{\begin{tabular}[c]{@{}c@{}}NUDT-SIRST(Train)\\ IRDST-1K(Test)\end{tabular}} \\ \cline{2-5} 
			&  IoU(\%)              & $P_d$(\%)             &   IoU(\%)             & $P_d$(\%)             \\ \noalign{\hrule height 1pt}
			DNANet & \second{50.4} & \second{84.8} & 42.0 & 85.2      \\
			ISNet & 44.3 & 75.9 & 25.9 & 86.5 \\
			UIUNet & 45.0 & 83.6 & \second{44.4} & \second{91.2} \\
			MSHNet & 46.3 & 77.1 & 40.5 & 89.6 \\
			RPCANet$^{++}$ & 21.2 & 64.8 & 16.8 & 75.6 \\
			RFONet &\first{61.3} &\first{91.9} & \first{54.4} &\first{93.3}\\\noalign{\hrule height 1pt}
		\end{tabular}
	\label{tab:Generalization_ability}
}
\end{table}
To evaluate cold-start capability, we conduct cross-dataset generalization experiments without additional training. This setting simulates practical scenarios where target-specific data are unavailable and evaluates whether the learned representations capture transferable target-background characteristics.
\begin{prop}[Frequency-shift Sensitivity of ERF Organization]
	\label{prop:generalization}
	Assume that the domain variation between two infrared datasets is reflected	by a bounded shift of the residual frequency distribution: $\|\Delta w\|_1\leq\delta$, where $\Delta w=w_{target}-w_{source}$.
	For a fixed initial residual decomposition $\{E_s\}_{s=1}^{S}$, the attenuation coefficient	$\Gamma_s(\mathcal{A})$ determines the remaining error of each frequency component after applying an ERF organization strategy $\mathcal{A}$. The sensitivity to the domain shift satisfies
	$\left|
	\Delta\mathcal{L}(\mathcal{A})
	\right|
	\leq
	\|\Delta w\|_1
	\max_s
	\left(
	\Gamma_s(\mathcal{A})\|E_s\|
	\right)$,
	where
	$\Gamma_s(\mathcal{A})
	=
	\prod_{k=1}^{K}
	(1-\eta_{s_k,s})$
	denotes the residual attenuation coefficient of frequency component $s$, and $\eta_{s_k,s}$ represents the correction strength of the $k$-th ERF operator for this frequency component.
	A single-scale ERF organization imposes a fixed frequency bias, leaving out-of-range residual components insufficiently suppressed. By progressively combining multiple ERF scales, the V-cycle organization achieves complementary frequency attenuation. Under the same initial residual decomposition, the worst-case residual sensitivity satisfies
	$\max_s
	\left(
	\Gamma_s(\mathcal{A}_{vc})\|E_s\|
	\right)
	\leq
	\max_s
	\left(
	\Gamma_s(\mathcal{A}_{single})\|E_s\|
	\right)$,
	where $\mathcal{A}_{single}$ denotes a repeated single-scale refinement. Therefore, the V-cycle ERF organization exhibits lower sensitivity to bounded frequency distribution shifts.
\end{prop}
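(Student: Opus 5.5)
We treat the statement in two parts: first the sensitivity bound for a fixed organization $\mathcal{A}$, then the comparison between $\mathcal{A}_{vc}$ and $\mathcal{A}_{single}$. The first part needs a loss model. We model the loss after refinement as a frequency-weighted residual energy,
\begin{equation}
\mathcal{L}_w(\mathcal{A})=\sum_{s=1}^{S} w_s\,\Gamma_s(\mathcal{A})\|E_s\|,
\end{equation}
where $w$ is the residual frequency distribution of the dataset. Under the multigrid principle (Definition~\ref{Def:MG}), each step acts on component $s$ as $(\mathcal{I}-\mathcal{C}_{s_k})$ with strength $\eta_{s_k,s}$. Composing the $K$ steps therefore leaves the remaining residual $\Gamma_s(\mathcal{A})\|E_s\|$ with $\Gamma_s(\mathcal{A})=\prod_k(1-\eta_{s_k,s})$. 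This uses the component-wise contraction of Theorem~\ref{Thm:ERFBlock_refinement}, with the perturbation $\xi_s$ absorbed into the effective strengths.

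Since the decomposition $\{E_s\}$ is fixed, $\mathcal{L}$ is linear in $w$. We write
\begin{equation}
\Delta\mathcal{L}(\mathcal{A})=\sum_s \Delta w_s\,\Gamma_s(\mathcal{A})\|E_s\|.
\end{equation}
H\"older's inequality ($\ell_1$--$\ell_\infty$) then gives $|\Delta\mathcal{L}(\mathcal{A})|\le\|\Delta w\|_1\max_s\Gamma_s(\mathcal{A})\|E_s\|$. Combined with $\|\Delta w\|_1\le\delta$, this is the first claim.

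For the comparison, we work at the component $s^\star$ attaining $\max_s\Gamma_s(\mathcal{A}_{vc})\|E_s\|$. We aim to show $\Gamma_{s^\star}(\mathcal{A}_{vc})\le\Gamma_{s^\star}(\mathcal{A}_{single})$. Without this, the bound could fail: in general we would need $\Gamma_s(\mathcal{A}_{vc})\le\Gamma_s(\mathcal{A}_{single})$ for whichever $s$ maximizes the V-cycle side, and the worst-case component may differ between the two organizations.

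The comparison also requires a matched step budget, so we take the single-scale scheme to use $K$ steps at a fixed scale $f$. We then pair steps as follows.
\begin{itemize}
\item From Assumption~\ref{Ass:SSR}, the matched strength dominates mismatched ones: $1-\alpha_s>1-\beta_{s,j}$, so $\eta_{s,s}\ge\eta_{f,s}$ for $f\ne s$.
\item The V-cycle contains scale $s$ (Proposition~\ref{prop:multi_scale_target}), so at least one of its factors is $(1-\eta_{s,s})\le(1-\eta_{f,s})$.
\end{itemize}

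The main obstacle is the remaining V-cycle factors. These are mismatched for $s$, and a single-scale scheme repeated at $f=s$ would beat the V-cycle on that one component. We would therefore argue componentwise only for $s\ne f$, and handle $s=f$ separately. The worst case of $\mathcal{A}_{single}$ is attained on some component $s'\ne f$, and there its product $(1-\eta_{f,s'})^K$ uses only mismatched strengths. To make this rigorous, the cleanest route is an additional ordering hypothesis, namely that the per-step strengths satisfy a minimax condition,
\begin{equation}
\min_{s}\sum_k\log(1-\eta_{s_k,s})^{-1}\big|_{vc}\ \ge\ \min_s K\log(1-\eta_{f,s})^{-1}.
\end{equation}
This can be justified by the symmetric coverage of $\mathcal{S}_V$ together with $\alpha_s<\beta_{s,j}$. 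Multiplying through by the $\|E_s\|$ weights then gives the stated max inequality; if needed, we state it under the mild assumption that the $\|E_s\|$ are comparable across scales.
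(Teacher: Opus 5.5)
Your first half is the paper's argument: the same linear loss model $\mathcal{L}(\mathcal{A},w)=\sum_s w_s\Gamma_s(\mathcal{A})\|E_s\|$, the same difference $\Delta\mathcal{L}=\sum_s\Delta w_s\Gamma_s(\mathcal{A})\|E_s\|$, and the same H\"older $\ell_1$--$\ell_\infty$ step. The gap is in the comparison $\max_s\Gamma_s(\mathcal{A}_{vc})\|E_s\|\le\max_s\Gamma_s(\mathcal{A}_{single})\|E_s\|$. The paper does not derive it either; it only asserts it as the effect of ``multi-scale attenuation.'' So you are not missing an argument the paper supplies, but your repair does not close the gap.

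After exponentiating, your minimax condition reads exactly $\max_s\Gamma_s(\mathcal{A}_{vc})\le\max_s\Gamma_s(\mathcal{A}_{single})$. That is the unweighted conclusion itself, so it can only be assumed, not justified. It also does not follow from Assumption~\ref{Ass:SSR}. The inequality $\alpha_s<\beta_{s,j}$ compares the \emph{same} operator $\mathcal{T}_s$ on two different components. Your step $\eta_{s,s}\ge\eta_{f,s}$ instead needs two \emph{different} operators compared on the same component. Moreover, $\alpha_s$ and $\beta_{s,j}$ are only upper bounds, so they cannot order the realized strengths.

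In fact the comparison is false even when matched operators dominate on every component. Take $S=2$, $\mathcal{S}_V=\{1,2,1\}$, and single scale $f=1$ with $K=3$. Set $\eta_{1,1}=0.5$, $\eta_{2,1}=0.01$, $\eta_{1,2}=0.4$, $\eta_{2,2}=0.45$, and $\|E_1\|=\|E_2\|$. This is consistent with Assumption~\ref{Ass:SSR} via $\alpha_1=0.5<\beta_{1,2}=0.6$ and $\alpha_2=0.55<\beta_{2,1}=0.99$. The V-cycle maximum is $\Gamma_1(\mathcal{A}_{vc})=0.5\cdot0.99\cdot0.5=0.2475$, while $\max_s\Gamma_s(\mathcal{A}_{single})=\Gamma_2(\mathcal{A}_{single})=0.6^3=0.216$.

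The culprit is exactly the case $s=f$ that your pairing sets aside. On component $f$, the V-cycle replaces some matched factors $(1-\eta_{f,f})$ with mismatched ones. Matched dominance therefore \emph{forces} $\Gamma_f(\mathcal{A}_{vc})>\Gamma_f(\mathcal{A}_{single})$. Once $\|E_f\|$ is large enough relative to the other components, both maxima sit at $s=f$ and the inequality reverses. Likewise, ``comparable $\|E_s\|$'' only transfers your unweighted condition to the weighted statement up to the factor $\max_s\|E_s\|/\min_s\|E_s\|$.

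The second claim therefore needs an explicit extra hypothesis; it is not a consequence of the paper's assumptions. One option is $\Gamma_{s^\star}(\mathcal{A}_{vc})\le\Gamma_{s^\star}(\mathcal{A}_{single})$ at the V-cycle's maximizing component $s^\star$, which under matched dominance requires $s^\star\neq f$. Another is your minimax condition together with equal $\|E_s\|$.
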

\begin{proof}
	According to the residual refinement formulation, the remaining error after	applying an ERF organization strategy $\mathcal{A}$ can be decomposed into different frequency components:
	\begin{equation}
		\mathcal{L}(\mathcal{A},w)
		=
		\sum_s
		w_s
		\Gamma_s(\mathcal{A})
		\|E_s\|.
	\end{equation}
	When the residual frequency distribution changes from
	$w_{source}$ to
	$w_{target}=w_{source}+\Delta w$,
	the corresponding variation of the refinement error is
	\begin{equation}
		\begin{aligned}
			\Delta\mathcal{L}
			&=
			\mathcal{L}(\mathcal{A},w+\Delta w)
			-
			\mathcal{L}(\mathcal{A},w)
			\\
			&=
			\sum_s
			\Delta w_s
			\Gamma_s(\mathcal{A})
			\|E_s\|.
		\end{aligned}
	\end{equation}
	Taking the absolute value and applying Hölder's inequality gives
	\begin{equation}
		\begin{aligned}
			|\Delta\mathcal{L}|
			&\leq
			\sum_s
			|\Delta w_s|
			\Gamma_s(\mathcal{A})
			\|E_s\|
			\\
			&\leq
			\|\Delta w\|_1
			\max_s
			\left(
			\Gamma_s(\mathcal{A})
			\|E_s\|
			\right).
		\end{aligned}
	\end{equation}
	Under a fixed residual decomposition, the sensitivity is bounded by	$\max_s\Gamma_s(\mathcal{A})\|E_s\|$. Single-scale refinement preserves larger residual responses for mismatched
	frequencies, whereas V-cycle reduces the maximum term through multi-scale attenuation:
	\begin{equation}
		\max_s
		\left(
		\Gamma_s(\mathcal{A}_{vc})
		\|E_s\|
		\right)
		\leq
		\max_s
		\left(
		\Gamma_s(\mathcal{A}_{single})
		\|E_s\|
		\right).
	\end{equation}
	Therefore, under bounded frequency distribution shifts, the V-cycle ERF	organization achieves lower refinement sensitivity than a fixed single-scale organization.
\end{proof}
\noindent\textbf{Remark.}
The proposition \ref{prop:generalization} does not claim that V-cycle universally eliminates all frequency components. Instead, it reveals that organizing multiple ERF scales reduces the dependence on any specific frequency preference, thereby improving the stability under domain-dependent residual distribution changes.

As shown in Tab. \ref{tab:Generalization_ability}, our method achieves superior cross-dataset generalization. The improvement stems from the V-cycle ERF organization, which continuously integrates multi-scale receptive fields within each ERFBlock and enables adaptive residual refinement across scales. By contrast, existing methods usually introduce large ERFs only in isolated modules, lacking systematic scale transitions. Interestingly, RPCANet$^{++}$ \cite{11537388} preserves an almost invariant ERF size throughout the network, which limits its ability to capture diverse residual patterns. These empirical findings further validate Proposition \ref{prop:generalization}, which establishes that ERF organizations with adaptive multi-scale refinement achieve improved generalization by alleviating sensitivity to residual distribution variations.

\subsection{Ablation Study}
\begin{theorem}[Existence of Optimal Refinement Depth]
	\label{Thm:ERF_length}
	Consider a progressive ERF refinement process with refinement depth $K$. The accumulated refinement objective is formulated as
	$\mathcal{L}(K)
	=
	\|E^{(0)}\|
	-
	S_{\max}(1-\rho^K)
	+
	D_{\max}(1-e^{-\beta K})$,
	where $0<\rho<1$ characterizes the diminishing residual reduction brought by iterative refinement, and $\beta>0$ controls the saturation rate of the representation distortion introduced by excessive refinement.
	If
	$\beta<-\ln\rho$,
	then there exists a finite optimal refinement depth $K^*$ minimizing
	$\mathcal{L}(K)$.
\end{theorem}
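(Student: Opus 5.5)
The plan is to treat the depth $K$ as a continuous variable $K\in[0,\infty)$, study the sign of $\mathcal{L}'(K)$, and show that the condition $\beta<-\ln\rho$ forces the derivative to become positive after some finite point. On that tail $\mathcal{L}$ is increasing, so the minimum must lie in a compact interval. Throughout I assume $S_{\max}>0$ and $D_{\max}>0$. If either vanishes, $\mathcal{L}$ is monotone, and the statement either reduces to $K^*=0$ or degenerates.

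\textbf{Step 1: the derivative.} Differentiating gives
\begin{equation*}
\mathcal{L}'(K)= -S_{\max}\,|\ln\rho|\,\rho^{K} + D_{\max}\,\beta\, e^{-\beta K},
\end{equation*}
using $\ln\rho<0$. The reduction term contributes a negative part decaying like $e^{-|\ln\rho|K}$. The distortion term contributes a positive part decaying like $e^{-\beta K}$. Writing $\mathcal{L}'(K)=S_{\max}|\ln\rho|\rho^K\,\bigl(h(K)-1\bigr)$ with
\begin{equation*}
h(K)=\frac{D_{\max}\beta}{S_{\max}|\ln\rho|}\,e^{(|\ln\rho|-\beta)K},
\end{equation*}
the sign of $\mathcal{L}'$ is the sign of $h(K)-1$.

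\textbf{Step 2: the sign of $h(K)-1$ and the critical point.} The hypothesis $\beta<-\ln\rho$ gives exactly that the exponent $|\ln\rho|-\beta$ is positive. Hence $h$ is strictly increasing and $h(K)\to\infty$.

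\emph{Case $D_{\max}\beta< S_{\max}|\ln\rho|$.} Then $h(0)<1$, so there is a unique crossing
\begin{equation*}
K_c=\frac{1}{|\ln\rho|-\beta}\ln\frac{S_{\max}|\ln\rho|}{D_{\max}\beta}>0 .
\end{equation*}
$\mathcal{L}$ decreases on $[0,K_c]$ and increases on $[K_c,\infty)$, so $K_c$ is the unique global minimizer over $[0,\infty)$.

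\emph{Case $D_{\max}\beta\ge S_{\max}|\ln\rho|$.} Then $\mathcal{L}'\ge 0$ everywhere, and $K^*=0$, which is still finite.

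\textbf{Step 3: back to integer depths.} Since depth is an integer, I then pass to $K\in\mathbb{N}$. By Step 2, $\mathcal{L}$ is nondecreasing on $[\lceil K_c\rceil,\infty)$. Hence $\min_{K\in\mathbb{N}}\mathcal{L}(K)$ is attained on the finite set $\{0,1,\dots,\lceil K_c\rceil\}$. Concretely, it is attained at $\lfloor K_c\rfloor$ or $\lceil K_c\rceil$, by unimodality. This gives the finite optimal depth $K^*$.

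\textbf{Main obstacle.} The only delicate point is the asymptotic argument. Both terms saturate, and $\mathcal{L}(K)\to \|E^{(0)}\|-S_{\max}+D_{\max}$, so boundedness alone does not guarantee the infimum is attained rather than approached as $K\to\infty$. The comparison of decay rates in Step 2 is what excludes this, because it shows $\mathcal{L}$ approaches its limit from below while increasing. I would also remark that if $\beta>-\ln\rho$ the argument reverses: $\mathcal{L}$ is eventually decreasing and the infimum may only be approached as $K\to\infty$. This is why the stated condition is the natural one.
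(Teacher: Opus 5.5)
Your proposal is correct and follows essentially the same route as the paper: differentiate, locate the unique stationary point $K^*=\ln\bigl(S_{\max}|\ln\rho|/(D_{\max}\beta)\bigr)/(|\ln\rho|-\beta)$ by comparing the decay rates of $\rho^K$ and $e^{-\beta K}$, and use $\beta<-\ln\rho$ to force $\mathcal{L}'$ to be eventually positive (the paper additionally checks $\mathcal{L}''(K^*)>0$). Your case split is slightly more careful than the paper. The paper asserts that $\mathcal{L}'$ is initially negative without noting that this needs $D_{\max}\beta<S_{\max}|\ln\rho|$, whereas you handle the complementary case correctly with $K^*=0$, and your passage to integer depths is a harmless addition.
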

\begin{proof}
	The derivative of $\mathcal{L}(K)$ with respect to the refinement depth is
	\begin{equation}
		\frac{\mathrm{d}\mathcal{L}(K)}{\mathrm{d}K}
		=
		S_{\max}\ln\rho\cdot\rho^K
		+
		D_{\max}\beta e^{-\beta K}.
	\end{equation}
	Since $0<\rho<1$, we have $\ln\rho<0$. Let
	$A=S_{\max}|\ln\rho|,
	\;
	B=D_{\max}\beta$.
	Then the derivative can be rewritten as
	\begin{equation}
		\frac{\mathrm{d}\mathcal{L}(K)}{\mathrm{d}K}
		=
		-A\rho^K
		+
		Be^{-\beta K}.
	\end{equation}
	The stationary point satisfies
	$Be^{-\beta K}
	=
	A\rho^K $.
	Using $\rho^K=e^{K\ln\rho}$, we obtain
	$e^{-K(\beta+\ln\rho)}
	=
	\frac{A}{B}$,
	which gives
	\begin{equation}
		K^*
		=
		\frac{\ln(B/A)}
		{\beta+\ln\rho}.
	\end{equation}
	A finite positive solution requires
	\(
	\beta+\ln\rho<0
	\),
	\ie,
	\(
	\beta<-\ln\rho
	\).
	Therefore, $\rho^K$ vanishes faster than $e^{-\beta K}$, making	$\mathcal{L}'(K)$ initially negative and eventually positive. Hence, $K^*$ is the unique minimizer. The second derivative at $K^*$ is
	\begin{equation}
		\mathcal{L}''(K^*)
		=
		Be^{-\beta K^*}
		(-\ln\rho-\beta).
	\end{equation}
	According to the condition $\beta<-\ln\rho$, we have
	$-\ln\rho-\beta>0$,
	and therefore
	\begin{equation}
		\mathcal{L}''(K^*)>0.
	\end{equation}
	This confirms that the stationary point corresponds to a local minimum,	which completes the proof.
\end{proof}
\noindent\textbf{Remark.}
Theorem~\ref{Thm:ERF_length} does not imply that deeper refinement always introduces degradation. Instead, it reveals a trade-off between diminishing residual reduction and saturated representation distortion. Therefore, the optimal number of ERF refinement stages is inherently task-dependent rather than universally fixed. Based on this observation, our ablation studies investigate both the effectiveness of the proposed V-cycle-inspired ERF scheduling and the selection of appropriate refinement depths and related hyperparameters.
\subsubsection{The Effectiveness of The Proposed ERFBlock}
\begin{table}[]
	\centering
	\caption{The Effectiveness of The Proposed ERFBlock}
	\setlength{\tabcolsep}{1.6pt}
	\begin{tabular}{cc|cccc}
		\noalign{\hrule height 1pt}
		\multicolumn{1}{c|}{\multirow{2}{*}{Module Variants}} & \multirow{1}*{Params} & \multicolumn{4}{c}{IRSTD-1K}  \\
		\multicolumn{1}{c|}{} &\multicolumn{1}{c|}{(M)}                                      & IoU(\%) $\uparrow$   & $F_1\text{(\%)}\uparrow$ & $P_d$(\%)$\uparrow$   & $F_a$($\times10^{-5}$) $\downarrow$    \\ \noalign{\hrule height 1pt}
		\multicolumn{1}{c|}{Vanilla ResUNet} & 31.4 & 64.56 & 78.44 & 88.66 & 2.03 \\ 
		\multicolumn{1}{c|}{w. ERFBlock}  & 1.16&72.14 & 83.82 & 92.26 & 4.18\\ 
		\noalign{\hrule height 1pt}
	\end{tabular} \label{Tab:ERFBlock}
\end{table}
\begin{figure*}
	\centering
	\includegraphics[width=\linewidth]{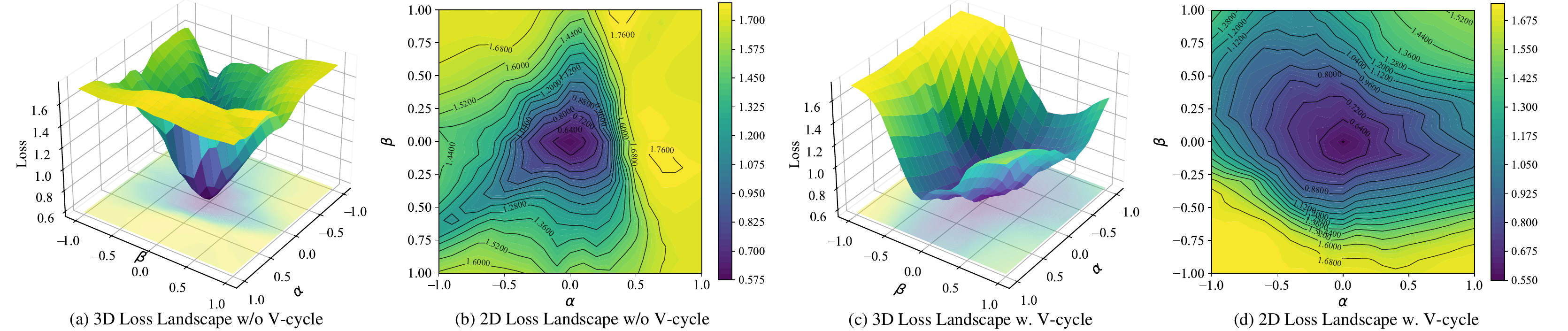}
	\caption{Loss landscape \cite{visualloss} visualization before and after incorporating the V-cycle scheduling. The proposed V-cycle strategy transforms the loss landscape from a rugged, high-curvature surface to a flatter and more structured topology, indicating improved optimization stability and generalization. The smoother gradients also facilitate more consistent convergence across different initializations.}
	\label{fig:loss_landscape}   
\end{figure*}
As shown in Tab. \ref{Tab:ERFBlock} and Fig. \ref{fig:loss_landscape}, replacing the basic building blocks of the vanilla U-Net with the proposed ERFBlocks consistently improves the detection performance, demonstrating the effectiveness of the proposed ERFBlock in enhancing feature refinement.
\subsubsection{The Effectiveness of V-cycle Strategy}
\begin{table}[]
	\centering
	\caption{The Effectiveness of V-cycle Strategy}
	\setlength{\tabcolsep}{4.3pt}
	\begin{tabular}{ccccc}
		\noalign{\hrule height 1pt}
		\multicolumn{1}{c|}{\multirow{2}{*}{Module Variants}} & \multicolumn{4}{c}{IRSTD-1K}  \\
		\multicolumn{1}{c|}{}                                      & IoU(\%) $\uparrow$   & $F_1\text{(\%)}\uparrow$ & $P_d$(\%)$\uparrow$   & $F_a$($\times10^{-5}$) $\downarrow$    \\ \noalign{\hrule height 1pt}
		\multicolumn{1}{c|}{Small $\rightarrow$ Large}  & 68.17 & 81.07 & 92.29 & 12.4 \\ 
		\multicolumn{1}{c|}{Large $\rightarrow$ Small}  & 67.15 & 80.35 & 92.59 & 11.9 \\ 
		\multicolumn{1}{c|}{V-cycle Strategy}  &72.14 & 83.82 & 92.26 & 4.18 \\
		\noalign{\hrule height 1pt}
	\end{tabular} \label{Tab:V-cycle}
\end{table}
To verify the effectiveness of the proposed V-cycle strategy, we construct two variants with monotonic ERF schedules, namely increasing-order and decreasing-order ERF organizations. As shown in Tab. \ref{Tab:V-cycle}, these results demonstrate that the performance gain does not originate from simply introducing multi-scale receptive fields, but from explicitly organizing ERF transitions during progressive refinement. This result demonstrates that a single-direction scale transition is insufficient to fully exploit the potential of multi-scale ERF refinement. Specifically, both increasing-order and decreasing-order strategies can be regarded as incomplete V-cycle trajectories, where the ERF evolution is restricted to only one direction and lacks complementary refinement across different spatial scales. In contrast, the V-cycle organization enables bidirectional scale traversal, allowing residual components to interact with multiple ERF scales during progressive refinement. This observation is consistent with Proposition~\ref{prop:multi_scale_target}, where the complete scale coverage of V-cycle provides more robust responses for targets with diverse spatial characteristics.
\subsubsection{The Effectiveness of Residual Learning}
\begin{table}[]
	\centering
	\caption{The Effectiveness of Residual Learning}
	\setlength{\tabcolsep}{4.3pt}
	\begin{tabular}{ccccc}
		\noalign{\hrule height 1pt}
		\multicolumn{1}{c|}{\multirow{2}{*}{Module Variants}} & \multicolumn{4}{c}{IRSTD-1K}  \\
		\multicolumn{1}{c|}{}                                      & IoU(\%) $\uparrow$   & $F_1\text{(\%)}\uparrow$ & $P_d$(\%)$\uparrow$   & $F_a$($\times10^{-5}$) $\downarrow$    \\ \noalign{\hrule height 1pt}
		\multicolumn{1}{c|}{w/o Residual}  & 64.29 & 78.26 & 91.92 & 17.1 \\ 
		\multicolumn{1}{c|}{w. Residual}  &72.14 & 83.82 & 92.26 & 4.18 \\ 
		\noalign{\hrule height 1pt}
	\end{tabular} \label{Tab:Residual}
\end{table}
Tab. \ref{Tab:Residual} reports the performance comparison with and without residual learning. Removing the residual connection leads to a significant performance degradation, demonstrating the importance of residual refinement in ERFBlock. This degradation can be attributed to the fact that directly stacking ERF transformations without identity preservation may progressively attenuate high-frequency target details, especially when large ERFs are involved, due to their stronger contextual aggregation capability. In contrast, the residual formulation preserves the original target representation while introducing multi-scale contextual refinement, thereby preventing excessive information smoothing during ERF transitions. This result also supports the theoretical analysis that residual learning maintains target response during multi-scale ERF refinement by providing a stable information-preserving pathway.
\subsubsection{Adaptive Receptive Field versus ERF Organization}
To further analyze the difference between adaptive receptive field learning and the proposed ERF organization, we formulate their adaptation spaces.
\begin{table}[]
	\centering
	\caption{Adaptive Receptive Field versus ERF Organization}
	\setlength{\tabcolsep}{4.0pt}
	\begin{tabular}{ccccc}
		\noalign{\hrule height 1pt}
		\multicolumn{1}{c|}{\multirow{2}{*}{Module Variants}} & \multicolumn{4}{c}{IRSTD-1K}  \\
		\multicolumn{1}{c|}{}                                      & IoU(\%) $\uparrow$   & $F_1\text{(\%)}\uparrow$ & $P_d$(\%)$\uparrow$   & $F_a$($\times10^{-5}$) $\downarrow$    \\ \noalign{\hrule height 1pt}
		\multicolumn{1}{c|}{w. RFAConv} & 63.00 &77.30&94.09&45.2\\
		\multicolumn{1}{c|}{w. DCNv4}  &65.83 &79.40&90.91&21.4 \\
		\noalign{\hrule height 1pt} 
		\multicolumn{1}{c|}{w. V-cycle}  &72.14 & 83.82 & 92.26 & 4.18 \\ 
		\multicolumn{1}{c|}{V-cycle + DCNv4}  & 73.21& 84.53 & 95.66 & 1.83 \\ 
		\noalign{\hrule height 1pt}
	\end{tabular} \label{Tab:DCN}
\end{table}
As shown in Tab.~\ref{Tab:DCN}, the proposed V-cycle based receptive field organization consistently surpasses the variant that abandons the scheduling and replaces all convolutional operations with DCNv4~\cite{xiong2024efficient} or RFAConv~\cite{ZHANG2026113208}. This comparison demonstrates that adaptive receptive field learning alone, without explicit multi-scale ordering, is insufficient for effective residual refinement. To further probe the relationship between the two paradigms, we integrate DCNv4 into the V-cycle trajectory of RFONet. The resulting model yields clear performance gains over the plain RFONet, confirming that adaptive spatial sampling and ERF scheduling are not competing but complementary. Specifically, the V-cycle furnishes a structured, frequency-aware refinement pathway, while DCNv4 provides flexible geometric adjustments along that pathway. Their combination leads to superior detection accuracy, validating the orthogonal nature of receptive field organization and operator-level adaptivity.
\subsubsection{Hyperparameters Discussion}
\begin{table}[]
	\centering
	\setlength{\tabcolsep}{4.3pt}
	\caption{Hyperparameters Discussion}
	\begin{tabular}{ccccc}
			\noalign{\hrule height 1pt}
			\multicolumn{1}{c|}{\multirow{2}{*}{Hyperparameters}} & \multicolumn{4}{c}{IRSTD-1K}  \\
			\multicolumn{1}{c|}{}                                     & IoU(\%) $\uparrow$   & $F_1\text{(\%)}\uparrow$ & $P_d$(\%)$\uparrow$   & $F_a$($\times10^{-5}$) $\downarrow$   \\ \noalign{\hrule height 1pt}
			\multicolumn{1}{c|}{$C = 8$} & 67.52 & 80.61 & 90.91 & 14.7 \\ 
			\multicolumn{1}{c|}{$C = 16$} &72.14 & 83.82 & 92.26 & 4.18\\
			\multicolumn{1}{c|}{$C = 32$} & 66.82 & 80.11 & 92.93 & 11.1 \\
			\hline
			\multicolumn{1}{c|}{L = 1} &68.60& 81.38 & 91.58 & 11.7 \\ 
			\multicolumn{1}{c|}{L = 2} &72.14 & 83.82 & 92.26 & 4.18 \\
			\multicolumn{1}{c|}{L = 3} & 67.12 & 80.32 & 92.26 & 15.4 \\
			\hline
			\multicolumn{1}{c|}{Stage = 2} & 64.94 & 78.74 & 91.92 & 17.6 \\
			\multicolumn{1}{c|}{Stage = 3} &72.14 & 83.82 & 92.26 & 4.18 \\
			\multicolumn{1}{c|}{Stage = 4} & 68.45 & 81.27 & 91.25 & 13.6 \\
			\hline
			\multicolumn{1}{c|}{V-cycle (s = 1)} &70.25 & 82.52 & 93.60 & 8.12 \\
			\multicolumn{1}{c|}{V-cycle (s = 2)} &72.14 & 83.82 & 92.26 & 4.18 \\
			\multicolumn{1}{c|}{V-cycle (s = 3)} & 71.62 & 83.46 & 91.58 & 7.96 \\
			\noalign{\hrule height 1pt}
	\end{tabular} \label{Tab:Size}
\end{table}
Finally, we investigate the influence of several key hyperparameters, including the number of stages, the base channel dimension (C), the number of ERFBlocks per stage (L), and the V-cycle scale parameter (s) in Eq. (\ref{Eq:s}). Following the standard U-Net configuration, we maintain the channel expansion and block allocation strategies as ($C_i=2C_{i-1}$) and ($L_i=L_{i-1}$), respectively. The corresponding ablation results are reported in Tab. \ref{Tab:Size}.

\subsection{Limitations}
\begin{theorem}[Task-dependent Optimal ERF Organization]
	\label{Thm:task_ERF}
	Let a task $\mathcal D_i$ contain residual components with different frequency distributions:
	$\mathbf E_i
	=
	\sum_s \mathbf E_{i,s}$,
	where $\mathbf E_{i,s}$ denotes the residual component associated with frequency band $\Omega_s$.
	Given an ERF organization strategy $\mathcal A=\{s_1,\cdots,s_K\}$, the residual after refinement can be characterized as
	$\mathcal J_i(\mathcal A)
	=
	\sum_s
	w_{i,s}
	\Gamma_s(\mathcal A)
	\|\mathbf E_{i,s}\|$,
	where $w_{i,s}$ denotes the importance of frequency component $s$ in task $\mathcal D_i$, and
	$\Gamma_s(\mathcal A)
	=
	\prod_{k=1}^{K}
	(1-\eta_{s_k,s})$
	represents the residual attenuation induced by the ERF organization. If two tasks $\mathcal D_a$ and $\mathcal D_b$ possess different residual frequency distributions, \ie,
	$w_{a,s}\neq w_{b,s}$,
	and different ERF organizations provide non-identical attenuation patterns, then their optimal ERF organizations are generally task-dependent:
	$\mathcal A_a^*
	=
	\arg\min_{\mathcal A}\mathcal J_a(\mathcal A),
	\;
	\mathcal A_b^*
	=
	\arg\min_{\mathcal A}\mathcal J_b(\mathcal A)$,
	and
	$\mathcal A_a^*
	\neq
	\mathcal A_b^*$
	in general.	Therefore, there is no universally optimal ERF organization independent of
	task characteristics.
\end{theorem}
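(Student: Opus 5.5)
The claim is an \emph{in general} statement, so I will not try to show that $\mathcal A_a^*\neq\mathcal A_b^*$ for every pair of tasks. That is false: if $w_a$ and $w_b$ are positive multiples of each other, the minimizers coincide. Instead I plan to prove two things. First, the optimal organization depends on the weights and is not invariant under changes of $w$. Second, the set of weight vectors on which a fixed organization stays optimal is a proper subset of the weight space, namely a polyhedral cone. Hence no single $\mathcal A$ is optimal for all tasks.

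The key observation is that, once the decomposition $\{\mathbf E_{i,s}\}$ and the correction strengths $\eta_{s_k,s}$ are fixed, $\mathcal J_i(\mathcal A)$ is linear in the weights. Write
\begin{equation*}
	\mathcal J_i(\mathcal A)=\langle w_i, c(\mathcal A)\rangle,\qquad c_s(\mathcal A)=\Gamma_s(\mathcal A)\|\mathbf E_s\|.
\end{equation*}
This is the same decomposition already used in the proof of Proposition~\ref{prop:generalization}. Taking the residual norms common to both tasks makes the argument cleanest, since then the tasks differ only through $w$, exactly as the hypothesis $w_{a,s}\neq w_{b,s}$ indicates. Organizations of length $K$ form a finite set, so each argmin exists. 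Moreover, $\mathcal A^*$ is optimal for a weight $w$ precisely when $\langle w, c(\mathcal A^*)-c(\mathcal A)\rangle\le0$ for all $\mathcal A$. This defines a closed convex cone $\mathcal K(\mathcal A^*)$ in weight space.

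Next I use the hypothesis that different organizations give non-identical attenuation patterns. It implies that for any two distinct candidates $\mathcal A\neq\mathcal A'$, the vectors $c(\mathcal A)$ and $c(\mathcal A')$ are not equal, after the natural normalization. I will then build two explicit weight vectors with different minimizers, following the scale-frequency correspondence of Property~\ref{prop:ERF_frequency}.

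Take a fine-scale schedule $\mathcal A_1=\{1,\dots,1\}$ and a coarse one $\mathcal A_S=\{S,\dots,S\}$. By Assumption~\ref{Ass:SSR}, matched correction is stronger than mismatched correction ($\alpha_s<\beta_{s,j}$), which gives
\begin{equation*}
	\Gamma_1(\mathcal A_1)<\Gamma_1(\mathcal A_S),\qquad \Gamma_S(\mathcal A_S)<\Gamma_S(\mathcal A_1).
\end{equation*}
Concentrating $w_a$ near $e_1$ then makes $\mathcal A_1$ beat $\mathcal A_S$, and concentrating $w_b$ near $e_S$ reverses the order. More generally, for any fixed $\mathcal A$ there is an alternative $\mathcal A'$ and a band $s$ with $c_s(\mathcal A')<c_s(\mathcal A)$. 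Choosing $w$ concentrated on that band makes $\mathcal A$ suboptimal, so $\mathcal K(\mathcal A)$ is not the whole simplex. This gives the conclusion that there is no universally optimal ERF organization.

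The main obstacle is the gap between this \emph{generic} statement and the literal pairwise claim $\mathcal A_a^*\neq\mathcal A_b^*$. The hypothesis $w_a\neq w_b$ alone does not force different minimizers when both weights lie in the same cone $\mathcal K(\mathcal A^*)$. I would handle this by stating the result precisely. The minimizers differ whenever $w_a$ and $w_b$ fall in different optimality cones. Because the finitely many cones cover the simplex and at least two of them have nonempty interior, this happens on a set of weight pairs of positive measure. That is the precise meaning I will attach to ``in general.'' A secondary difficulty is justifying the strict attenuation inequalities used above. I would obtain them from $1-\eta_{s_k,s}\le\alpha_{s_k}$ for matched scales and $1-\eta_{s_k,s}\le\beta_{s_k,s}$ for mismatched scales, reading the per-step contraction of Assumption~\ref{Ass:SSR} as the factor $1-\eta$. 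This reading holds only as an upper bound, so the strict comparison may need to be taken as a modeling hypothesis rather than derived.
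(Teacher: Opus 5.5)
Your route differs from the paper's and does more. The paper writes out $\mathcal J_a$ and $\mathcal J_b$ and notes that $\Gamma_s(\mathcal A_p)\neq\Gamma_s(\mathcal A_q)$. It then supposes a universal minimizer $\mathcal A^*$ exists and asserts that ``distinct preferences'' on frequency components yield task-dependent trade-offs. No contradiction is derived, and $\mathcal A_a^*\neq\mathcal A_b^*$ is never established.

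Your reformulation supplies the missing content:
\begin{itemize}
\item linearity of $\mathcal J_i$ in the effective weights $w_{i,s}\|\mathbf E_{i,s}\|$;
\item optimality cones $\mathcal K(\mathcal A)$ and extreme weight vectors;
\item reading ``in general'' as ``on a positive-measure set of weight pairs''.
\end{itemize}
Your remark that proportional weights, or any two weights in the same cone, satisfy $w_{a,s}\neq w_{b,s}$ yet share a minimizer is correct. It shows the literal pairwise claim is false. You are also right that the strict comparisons cannot be extracted from the upper bounds $\alpha_s<\beta_{s,j}$ of Assumption~\ref{Ass:SSR}.

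There is, however, a gap at the step that carries the conclusion: ``for any fixed $\mathcal A$ there is an alternative $\mathcal A'$ and a band $s$ with $c_s(\mathcal A')<c_s(\mathcal A)$.'' Comparing $\mathcal A_1=\{1,\dots,1\}$ with $\mathcal A_S=\{S,\dots,S\}$ only shows that these two pure schedules do not dominate each other. It says nothing about a mixed schedule that beats both on every band. The hypothesis of ``non-identical attenuation patterns'' cannot exclude this either, since distinct vectors can still be ordered componentwise.

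The danger is real. Assumption~\ref{Ass:SSR} permits $\alpha_s=0$. If $1-\eta_{s,s}=0$ for every $s$ and $K\ge S$, then any schedule that visits every scale gives $\Gamma_s=0$ for all $s$. The V-cycle $\{1,\dots,S,\dots,1\}$ with $K=2S-1$ is one such schedule. Then $\mathcal J_i=0$ for every task, so that schedule is universally optimal. Even when the pairwise strict comparison holds, it does not identify the argmin near $e_1$ or $e_S$.

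The fix is to make the modeling hypothesis per step and strictly positive: $0<1-\eta_{s,s}<1-\eta_{k,s}$ for all $k\neq s$, together with $\|\mathbf E_1\|,\|\mathbf E_S\|>0$ and $S\ge 2$. Then, among length-$K$ schedules, $c_s$ is uniquely minimized by the all-$s$ schedule. So any organization optimal at $e_1$ must be $\mathcal A_1$, and any organization optimal at $e_S$ must be $\mathcal A_S$. This gives the nonexistence of a universal optimum directly. Finiteness plus the strict gaps also give open neighborhoods of $e_1$ and $e_S$ inside two distinct cones, which your positive-measure statement needs.

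A smaller point: $\Gamma_s(\mathcal A)=\prod_k(1-\eta_{s_k,s})$ is invariant under permuting $\{s_1,\dots,s_K\}$. Your claim that distinct candidates have distinct vectors $c(\mathcal A)$ is therefore false for reorderings, e.g.\ $\{1,2,1\}$ and $\{1,1,2\}$. Argmins and cones should be taken over multisets of scales. Within this model the theorem constrains only which scales are used, not their order.
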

\begin{proof}
	For task $\mathcal D_i$, the optimal ERF organization is obtained by
	\begin{equation}
		\mathcal A_i^*
		=
		\arg\min_{\mathcal A}
		\sum_s
		w_{i,s}
		\Gamma_s(\mathcal A)
		\|\mathbf E_{i,s}\|.
	\end{equation}
	Consider two tasks $\mathcal D_a$ and $\mathcal D_b$. Their optimization objectives are
	\begin{equation}
		\mathcal J_a(\mathcal A)
		=
		\sum_s
		w_{a,s}
		\Gamma_s(\mathcal A)
		\|\mathbf E_{a,s}\|,
	\end{equation}
	and
	\begin{equation}
		\mathcal J_b(\mathcal A)
		=
		\sum_s
		w_{b,s}
		\Gamma_s(\mathcal A)
		\|\mathbf E_{b,s}\|.
	\end{equation}
	Because different ERF organizations generate different frequency-dependent attenuation:
	\begin{equation}
		\Gamma_s(\mathcal A_p)
		\neq
		\Gamma_s(\mathcal A_q),
	\end{equation}
	the relative advantage of an ERF organization depends on the weights of	different residual components. Assume a universal optimal organization $\mathcal A^*$ exists for both tasks.
	Then $\mathcal A^*$ must simultaneously minimize
	\begin{equation}
		\sum_s
		w_{a,s}
		\Gamma_s(\mathcal A)
		\|\mathbf E_{a,s}\|
	\end{equation}
	and
	\begin{equation}
		\sum_s
		w_{b,s}
		\Gamma_s(\mathcal A)
		\|\mathbf E_{b,s}\|.
	\end{equation}
	However, different tasks impose distinct preferences on residual frequency components, leading to task-dependent optimal ERF trade-offs. Therefore, the optimal ERF organization is inherently coupled with the frequency composition of task-specific residual errors.
\end{proof}
\noindent\textbf{Remark.}
Theorem~\ref{Thm:task_ERF} reveals an important limitation of the current work. Although we demonstrate that ERF ordering plays a critical role in progressive residual refinement, this work does not attempt to identify a universally optimal ERF organization. Instead, the theorem indicates that the optimal ERF scheduling is inherently coupled with the residual frequency
distribution of each task. Therefore, incorporating ERF organization into neural architecture search (NAS) \cite{11230099} provides a promising direction for automatically discovering task-adaptive refinement strategies \cite{11568692}. However, searching over ERF scales, ordering patterns, repetition numbers, and diverse implementation forms (\eg, convolution-based, attention-based, or adaptive receptive field modulation mechanisms) constitutes a high-dimensional combinatorial optimization problem, which is generally NP-hard \cite{11599663}. Efficiently exploring this task-dependent ERF design space remains an open problem.
\section{Conclusion}\label{Section:Conclusion}
In this work, we revisit infrared small target detection from the perspective of effective receptive field (ERF) organization. We demonstrate that ERF scheduling is an independent architectural factor beyond receptive field enlargement, and reveal its frequency-selective and non-commutative properties through theoretical analysis. Based on these insights, we propose RFONet with V-cycle ERF scheduling to validate the effectiveness of organized receptive field refinement. Our analysis further indicates that optimal ERF organization is task-dependent, opening a new direction for adaptive architecture design.

\bibliographystyle{IEEEtran}
%\bibliography{reference}

\begin{thebibliography}{10}
	\providecommand{\url}[1]{#1}
	\csname url@samestyle\endcsname
	\providecommand{\newblock}{\relax}
	\providecommand{\bibinfo}[2]{#2}
	\providecommand{\BIBentrySTDinterwordspacing}{\spaceskip=0pt\relax}
	\providecommand{\BIBentryALTinterwordstretchfactor}{4}
	\providecommand{\BIBentryALTinterwordspacing}{\spaceskip=\fontdimen2\font plus
		\BIBentryALTinterwordstretchfactor\fontdimen3\font minus
		\fontdimen4\font\relax}
	\providecommand{\BIBforeignlanguage}[2]{{%
			\expandafter\ifx\csname l@#1\endcsname\relax
			\typeout{** WARNING: IEEEtran.bst: No hyphenation pattern has been}%
			\typeout{** loaded for the language `#1'. Using the pattern for}%
			\typeout{** the default language instead.}%
			\else
			\language=\csname l@#1\endcsname
			\fi
			#2}}
	\providecommand{\BIBdecl}{\relax}
	\BIBdecl
	
	\bibitem{10747828}
	J.~Li, X.~Wang, H.~Zhao, and Y.~Zhong, ``Learning a cross-modality anomaly
	detector for remote sensing imagery,'' \emph{IEEE Transactions on Image
		Processing}, vol.~33, pp. 6607--6621, 2024.
	
	\bibitem{10091756}
	Y.~Chen, Z.~Wang, and X.~Bai, ``Fuzzy sparse subspace clustering for infrared
	image segmentation,'' \emph{IEEE Transactions on Image Processing}, vol.~32,
	pp. 2132--2146, 2023.
	
	\bibitem{9217948}
	H.~Zhu, H.~Ni, S.~Liu, G.~Xu, and L.~Deng, ``Tnlrs: Target-aware non-local
	low-rank modeling with saliency filtering regularization for infrared small
	target detection,'' \emph{IEEE Transactions on Image Processing}, vol.~29,
	pp. 9546--9558, 2020.
	
	\bibitem{luo2016understanding}
	W.~Luo, Y.~Li, R.~Urtasun, and R.~Zemel, ``Understanding the effective
	receptive field in deep convolutional neural networks,'' \emph{Advances in
		neural information processing systems}, vol.~29, 2016.
	
	\bibitem{11483348}
	J.~Jia, Y.~J. Lee, and I.~Ojeda-Ruiz, ``An efficient k-way constrained
	normalized cut and its connection to algebraic multigrid method,'' \emph{IEEE
		Transactions on Image Processing}, vol.~35, pp. 4453--4466, 2026.
	
	\bibitem{11479915}
	R.~Li, W.~An, Y.~Wang, X.~Ying, Y.~Dai, L.~Wang, M.~Li, Y.~Guo, and L.~Liu,
	``Probing deep into temporal profile makes the infrared small target detector
	much better,'' \emph{IEEE Transactions on Pattern Analysis and Machine
		Intelligence}, vol.~48, no.~8, pp. 10\,157--10\,175, 2026.
	
	\bibitem{DNANet}
	B.~Li, C.~Xiao, L.~Wang, Y.~Wang, Z.~Lin, M.~Li, W.~An, and Y.~Guo, ``Dense
	nested attention network for infrared small target detection,'' \emph{IEEE
		Transactions on Image Processing}, vol.~32, pp. 1745--1758, 2023.
	
	\bibitem{UIUNet}
	X.~Wu, D.~Hong, and J.~Chanussot, ``Uiu-net: U-net in u-net for infrared small
	object detection,'' \emph{IEEE Transactions on Image Processing}, vol.~32,
	pp. 364--376, 2023.
	
	\bibitem{ISNet}
	M.~Zhang, R.~Zhang, Y.~Yang, H.~Bai, J.~Zhang, and J.~Guo, ``Isnet: Shape
	matters for infrared small target detection,'' in \emph{2022 IEEE/CVF
		Conference on Computer Vision and Pattern Recognition (CVPR)}, 2022, pp.
	867--876.
	
	\bibitem{CSRNet}
	F.~Lin, K.~Bao, Y.~Li, D.~Zeng, and S.~Ge, ``Learning contrast-enhanced
	shape-biased representations for infrared small target detection,''
	\emph{IEEE Transactions on Image Processing}, vol.~33, pp. 3047--3058, 2024.
	
	\bibitem{11174084}
	B.~Nian, F.~Tang, J.~Ding, J.~Yang, Z.~Zheng, S.~Kevin~Zhou, and W.~Liu, ``Srs:
	Siamese reconstruction-segmentation network based on dynamic-parameter
	convolution,'' \emph{IEEE Transactions on Image Processing}, vol.~34, pp.
	6318--6330, 2025.
	
	\bibitem{liu2023infrared}
	F.~Liu, C.~Gao, F.~Chen, D.~Meng, W.~Zuo, and X.~Gao, ``Infrared small and dim
	target detection with transformer under complex backgrounds,'' \emph{IEEE
		Transactions on Image Processing}, vol.~32, pp. 5921--5932, 2023.
	
	\bibitem{11568953}
	X.~Zhu, F.~Qin, C.~Wang, J.~Fan, F.~Lin, J.~Bai, C.~Zhang, and D.~Zhang,
	``Progressive fusion of multi-scale mamba context and local detail priors for
	infrared small target detection,'' \emph{IEEE Transactions on Image
		Processing}, vol.~35, pp. 6557--6571, 2026.
	
	\bibitem{11278553}
	M.~Zhang, X.~Li, J.~Guo, Y.~Li, and X.~Gao, ``Wmrnet: Wavelet mamba with
	reversible structure for infrared small target detection,'' \emph{IEEE
		Transactions on Image Processing}, vol.~34, pp. 8229--8242, 2025.
	
	\bibitem{11080263}
	B.~Yang, F.~Li, S.~Zhao, W.~Wang, J.~Luo, H.~Pu, M.~Zhou, and Y.~Pi, ``Mtmlnet:
	Multi-task mutual learning network for infrared small target detection and
	segmentation,'' \emph{IEEE Transactions on Image Processing}, vol.~34, pp.
	4414--4425, 2025.
	
	\bibitem{IRSAM}
	M.~Zhang, Y.~Wang, J.~Guo, Y.~Li, X.~Gao, and J.~Zhang, ``Irsam: Advancing
	segment anything model for infrared small target detection,'' in
	\emph{Computer Vision -- ECCV 2024}, pp. 233--249.
	
	\bibitem{11537388}
	F.~Wu, Y.~Dai, T.~Zhang, Y.~Ding, J.~Yang, M.-M. Cheng, and Z.~Peng,
	``Rpcanet$^{++}$: Deep interpretable robust pca for sparse object
	segmentation,'' \emph{IEEE Transactions on Pattern Analysis and Machine
		Intelligence}, pp. 1--18, 2026.
	
	\bibitem{11440142}
	Z.~Liang, Y.~Wang, L.~Wang, J.~Yang, Y.~Guo, L.~Liu, S.~Zhou, and W.~An,
	``Diving into epipolar transformers for light field super-resolution and
	disparity estimation,'' \emph{IEEE Transactions on Pattern Analysis and
		Machine Intelligence}, vol.~48, no.~7, pp. 8726--8743, 2026.
	
	\bibitem{10962317}
	Z.~Jin, Y.~Qiu, K.~Zhang, H.~Li, and W.~Luo, ``Mb-taylorformer v2: Improved
	multi-branch linear transformer expanded by taylor formula for image
	restoration,'' \emph{IEEE Transactions on Pattern Analysis and Machine
		Intelligence}, vol.~47, no.~7, pp. 5990--6005, 2025.
	
	\bibitem{9186840}
	S.~Li, Y.~A. Farha, Y.~Liu, M.-M. Cheng, and J.~Gall, ``Ms-tcn++: Multi-stage
	temporal convolutional network for action segmentation,'' \emph{IEEE
		Transactions on Pattern Analysis and Machine Intelligence}, vol.~45, no.~6,
	pp. 6647--6658, 2023.
	
	\bibitem{10872821}
	Y.~Chen, X.~Yuan, J.~Wang, R.~Wu, X.~Li, Q.~Hou, and M.-M. Cheng, ``Yolo-ms:
	Rethinking multi-scale representation learning for real-time object
	detection,'' \emph{IEEE Transactions on Pattern Analysis and Machine
		Intelligence}, vol.~47, no.~6, pp. 4240--4252, 2025.
	
	\bibitem{wu2025lrformer}
	Y.-H. Wu, S.-C. Zhang, Y.~Liu, L.~Zhang, X.~Zhan, D.~Zhou, J.~Feng, M.-M.
	Cheng, and L.~Zhen, ``Low-resolution self-attention for semantic
	segmentation,'' \emph{IEEE Transactions on Pattern Analysis and Machine
		Intelligence}, 2025.
	
	\bibitem{11130640}
	Y.~Zhang, X.~Ding, and X.~Yue, ``Scaling up your kernels: Large kernel design
	in convnets toward universal representations,'' \emph{IEEE Transactions on
		Pattern Analysis and Machine Intelligence}, vol.~47, no.~12, pp.
	11\,692--11\,707, 2025.
	
	\bibitem{mamba2}
	T.~Dao and A.~Gu, ``Transformers are {SSM}s: Generalized models and efficient
	algorithms through structured state space duality,'' in \emph{International
		Conference on Machine Learning (ICML)}, 2024.
	
	\bibitem{11304568}
	G.~Wu, J.~Jiang, K.~Jiang, X.~Liu, and L.~Nie, ``Dswinir: Rethinking
	window-based attention for image restoration,'' \emph{IEEE Transactions on
		Pattern Analysis and Machine Intelligence}, vol.~48, no.~4, pp. 4350--4366,
	2026.
	
	\bibitem{hou2024conv2former}
	Q.~Hou, C.-Z. Lu, M.-M. Cheng, and J.~Feng, ``Conv2former: A simple
	transformer-style convnet for visual recognition,'' \emph{IEEE Transactions
		on Pattern Analysis and Machine Intelligence}, 2024.
	
	\bibitem{xiong2024efficient}
	Y.~Xiong, Z.~Li, Y.~Chen, F.~Wang, X.~Zhu, J.~Luo, W.~Wang, T.~Lu, H.~Li,
	Y.~Qiao \emph{et~al.}, ``Efficient deformable convnets: Rethinking dynamic
	and sparse operator for vision applications,'' in \emph{Proceedings of the
		IEEE/CVF conference on computer vision and pattern recognition}, 2024, pp.
	5652--5661.
	
	\bibitem{11498646}
	Y.~Kim, S.~Han, and N.~Kwak, ``From local to global to mechanistic: An
	ierf-centered unified framework for interpreting vision models,'' \emph{IEEE
		Transactions on Pattern Analysis and Machine Intelligence}, pp. 1--17, 2026.
	
	\bibitem{gao2013infrared}
	C.~Gao, D.~Meng, Y.~Yang, Y.~Wang, X.~Zhou, and A.~G. Hauptmann, ``Infrared
	patch-image model for small target detection in a single image,'' \emph{IEEE
		Transactions on Image Processing}, vol.~22, no.~12, pp. 4996--5009, 2013.
	
	\bibitem{8727950}
	R.~Liu, S.~Cheng, Y.~He, X.~Fan, Z.~Lin, and Z.~Luo, ``On the convergence of
	learning-based iterative methods for nonconvex inverse problems,'' \emph{IEEE
		Transactions on Pattern Analysis and Machine Intelligence}, vol.~42, no.~12,
	pp. 3027--3039, 2020.
	
	\bibitem{huang2023adaptive}
	Z.~Huang, Z.~Zhang, C.~Lan, Z.-J. Zha, Y.~Lu, and B.~Guo, ``Adaptive frequency
	filters as efficient global token mixers,'' in \emph{ICCV}, 2023.
	
	\bibitem{MSHNet}
	Q.~Liu, R.~Liu, B.~Zheng, H.~Wang, and Y.~Fu, ``Infrared small target detection
	with scale and location sensitivity,'' in \emph{Proceedings of the IEEE/CVF
		Computer Vision and Pattern Recognition}, 2024.
	
	\bibitem{LCRNet}
	G.~Zhang, G.~Xu, S.~Chen, H.~Wang, and X.~Zhang, ``Learning dynamic local
	context representations for infrared small target detection,'' \emph{IEEE
		Transactions on Geoscience and Remote Sensing}, vol.~63, pp. 1--13, 2025.
	
	\bibitem{10061442}
	Z.~Zhang, K.~Chen, K.~Tang, and Y.~Duan, ``Fast multi-grid methods for
	minimizing curvature energies,'' \emph{IEEE Transactions on Image
		Processing}, vol.~32, pp. 1716--1731, 2023.
	
	\bibitem{9560049}
	Y.~Han, G.~Huang, S.~Song, L.~Yang, H.~Wang, and Y.~Wang, ``Dynamic neural
	networks: A survey,'' \emph{IEEE Transactions on Pattern Analysis and Machine
		Intelligence}, vol.~44, no.~11, pp. 7436--7456, 2022.
	
	\bibitem{NEURIPS2019_7716d0fc}
	T.~Liu, M.~Chen, M.~Zhou, S.~Du, E.~Zhou, and T.~Zhao, ``Towards understanding
	the importance of shortcut connections in residual networks,'' in
	\emph{Advances in Neural Information Processing Systems}, vol.~32, 2019.
	
	\bibitem{11069297}
	G.~Xu, G.~Zhang, L.~Ye, S.~Gan, X.~Zhang, and X.~Yang, ``Optimizing
	local-global dependencies for accurate 3d human pose estimation,'' \emph{IEEE
		Transactions on Circuits and Systems for Video Technology}, vol.~35, no.~12,
	pp. 12\,306--12\,316, 2025.
	
	\bibitem{ALCNet}
	Y.~Dai, Y.~Wu, F.~Zhou, and K.~Barnard, ``Attentional local contrast networks
	for infrared small target detection,'' \emph{IEEE Transactions on Geoscience
		and Remote Sensing}, vol.~59, no.~11, pp. 9813--9824, 2021.
	
	\bibitem{AGPCNet}
	T.~Zhang, L.~Li, S.~Cao, T.~Pu, and Z.~Peng, ``Attention-guided pyramid context
	networks for detecting infrared small target under complex background,''
	\emph{IEEE Transactions on Aerospace and Electronic Systems}, vol.~59, no.~4,
	pp. 4250--4261, 2023.
	
	\bibitem{chen2024frequency}
	L.~Chen, Y.~Fu, L.~Gu, C.~Yan, T.~Harada, and G.~Huang, ``Frequency-aware
	feature fusion for dense image prediction,'' \emph{IEEE Transactions on
		Pattern Analysis and Machine Intelligence}, vol.~46, no.~12, pp.
	10\,763--10\,780, 2024.
	
	\bibitem{TGRS2025DRPCANet}
	Z.~Xiong, F.~Zhou, F.~Wu, S.~Yuan, M.~Fu, Z.~Peng, J.~Yang, and Y.~Dai,
	``Drpca-net: Make robust pca great again for infrared small target
	detection,'' \emph{IEEE Transactions on Geoscience and Remote Sensing},
	vol.~63, pp. 1--16, 2025.
	
	\bibitem{yang2025pinwheel}
	J.~Yang, S.~Liu, J.~Wu, X.~Su, N.~Hai, and X.~Huang, ``Pinwheel-shaped
	convolution and scale-based dynamic loss for infrared small target
	detection,'' in \emph{Proceedings of the AAAI Conference on Artificial
		Intelligence}, vol.~39, no.~9, 2025, pp. 9202--9210.
	
	\bibitem{li2025u}
	C.~Li, X.~Liu, W.~Li, C.~Wang, H.~Liu, Y.~Liu, Z.~Chen, and Y.~Yuan, ``U-kan
	makes strong backbone for medical image segmentation and generation,'' in
	\emph{Proceedings of the AAAI conference on artificial intelligence},
	vol.~39, no.~5, 2025, pp. 4652--4660.
	
	\bibitem{11014512}
	Z.~Sun, H.~Wang, Q.~Xie, Y.~Zheng, and D.~Meng, ``Rsf-conv: Rotation-and-scale
	equivariant fourier parameterized convolution for retinal vessel
	segmentation,'' \emph{IEEE Transactions on Neural Networks and Learning
		Systems}, vol.~36, no.~9, pp. 16\,549--16\,563, 2025.
	
	\bibitem{zhou2025nnwnet}
	Y.~Zhou, L.~Li, L.~Lu, and M.~Xu, ``nnwnet: Rethinking the use of transformers
	in biomedical image segmentation and calling for a unified evaluation
	benchmark,'' in \emph{Proceedings of the Computer Vision and Pattern
		Recognition Conference}, 2025, pp. 20\,852--20\,862.
	
	\bibitem{10844040}
	C.~Hao, Z.~Yu, X.~Liu, J.~Xu, H.~Yue, and J.~Yang, ``A simple yet effective
	network based on vision transformer for camouflaged object and salient object
	detection,'' \emph{IEEE Transactions on Image Processing}, vol.~34, pp.
	608--622, 2025.
	
	\bibitem{chen2025frequency}
	L.~Chen, L.~Gu, L.~Li, C.~Yan, and Y.~Fu, ``Frequency dynamic convolution for
	dense image prediction,'' in \emph{Proceedings of the Computer Vision and
		Pattern Recognition Conference}, 2025, pp. 30\,178--30\,188.
	
	\bibitem{chen2025thernet}
	J.~Chen, S.~Shu, C.~Meng, and X.~Bai, ``Thernet: Thermal segmentation network
	harnessing physical properties,'' \emph{IEEE Transactions on Pattern Analysis
		and Machine Intelligence}, 2025.
	
	\bibitem{11297835}
	R.~Cong, Z.~Chen, H.~Fang, S.~Kwong, and W.~Zhang, ``Breaking barriers,
	localizing saliency: A large-scale benchmark and baseline for
	condition-constrained salient object detection,'' \emph{IEEE Transactions on
		Pattern Analysis and Machine Intelligence}, vol.~48, no.~4, pp. 4167--4183,
	2026.
	
	\bibitem{11515007}
	X.~Wang, F.~Yao, G.~Zhong, Q.~Cai, S.~Wang, and J.~T.-Y. Kwok, ``Gbnet: Gated
	boundary-aware network for camouflaged object detection,'' \emph{IEEE
		Transactions on Image Processing}, vol.~35, pp. 5297--5310, 2026.
	
	\bibitem{11373245}
	R.~Li, C.~Xiao, Q.~Yin, W.~An, N.~Chen, X.~Ying, M.~Li, and Y.~Wang, ``Dynamic
	high-frequency convolution for infrared small target detection,'' \emph{IEEE
		Transactions on Circuits and Systems for Video Technology}, vol.~36, no.~6,
	pp. 7676--7680, 2026.
	
	\bibitem{yuan2026seeing}
	M.~Yuan, D.~Meng, Z.~Xi, T.~Zhao, S.~Zhao, Y.~Dai, and X.~Wei, ``Seeing through
	the noise: Improving infrared small target detection and segmentation from
	noise suppression perspective,'' in \emph{Proceedings of the IEEE/CVF
		Conference on Computer Vision and Pattern Recognition}, 2026, pp.
	27\,783--27\,792.
	
	\bibitem{SeRankDet}
	Y.~Dai, P.~Pan, Y.~Qian, Y.~Li, X.~Li, J.~Yang, and H.~Wang, ``Pick of the
	bunch: Detecting infrared small targets beyond hit-miss trade-offs via
	selective rank-aware attention,'' \emph{IEEE Transactions on Geoscience and
		Remote Sensing}, vol.~62, pp. 1--15, 2024.
	
	\bibitem{visualloss}
	H.~Li, Z.~Xu, G.~Taylor, C.~Studer, and T.~Goldstein, ``Visualizing the loss
	landscape of neural nets,'' in \emph{Neural Information Processing Systems},
	2018.
	
	\bibitem{ZHANG2026113208}
	X.~Zhang, C.~Liu, T.~Song, D.~Yang, Y.~Ye, K.~Li, and Y.~Song, ``Rfaconv:
	Receptive-field attention convolution for improving convolutional neural
	networks,'' \emph{Pattern Recognition}, vol. 176, p. 113208, 2026.
	
	\bibitem{11230099}
	X.~Zheng, L.~Zhang, B.~Chen, M.~Wang, F.~Chao, C.~Wu, S.~Wang, R.~Ji, and
	Y.~Tian, ``Good performance estimation strategies are all you need in neural
	architecture search,'' \emph{IEEE Transactions on Pattern Analysis and
		Machine Intelligence}, vol.~48, no.~3, pp. 2398--2412, 2026.
	
	\bibitem{11568692}
	F.-L. Fan, M.~Wang, H.-C. Dong, J.~Ma, and T.~Zeng, ``No one-size-fits-all
	neurons: Task-based neurons for artificial neural networks,'' \emph{IEEE
		Transactions on Pattern Analysis and Machine Intelligence}, pp. 1--8, 2026.
	
	\bibitem{11599663}
	S.~Lee, ``Np-hardness of minimizing neurons in two-hidden-layer relu neural
	networks,'' \emph{IEEE Transactions on Pattern Analysis and Machine
		Intelligence}, pp. 1--9, 2026.
	
\end{thebibliography}
% Generated by IEEEtran.bst, version: 1.14 (2015/08/26)

\newpage

\vfill

\end{document}